\newtheorem{theorem}{Theorem}[section]
\newtheorem{corollary}{Corollary}[theorem]
\newtheorem{lemma}[theorem]{Lemma}
\newtheorem{definition}{Definition}[section]
\newcommand\fs@betterruled{%
  \def\@fs@cfont{\bfseries}\let\@fs@capt\floatc@ruled
  \def\@fs@pre{\vspace*{5pt}\hrule height.8pt depth0pt \kern2pt}%
  \def\@fs@post{\kern2pt\hrule\relax}%
  \def\@fs@mid{\kern2pt\hrule\kern2pt}%
  \let\@fs@iftopcapt\iftrue}
\def\argmax{\operatornamewithlimits{arg\,max}}
\newcommand{\QX}{Q_{\scriptscriptstyle X}}
\newcommand{\BE}{\mathbb{E}}
\newcommand{\BA}{\mathbb{A}}
\newcommand{\BI}{\mathbb{I}}
\newcommand{\muM}{\widehat{\mu}}
\def \CD{\mathcal{D}}
\DeclarePairedDelimiter\ceil{\lceil}{\rceil}
\DeclareMathOperator {\sign}{sign}
\DeclareMathOperator {\med}{med}
\newcommand{\RR}{\mathbb{R}}
\newcommand{\NN}{\mathbb{N}}
\newcommand{\EE}{\mathbb{E}}
\newcommand{\PP}{\mathbb{P}}
\DeclareFontFamily{U}{mathx}{\hyphenchar\font45}
\DeclareFontShape{U}{mathx}{m}{n}{<-> mathx10}{}
\DeclareSymbolFont{mathx}{U}{mathx}{m}{n}
\DeclareMathAccent{\widebar}{0}{mathx}{"73}
\newtheorem{assumption}{A\hspace{-1.4mm}}
\providecommand{\customgenericname}{}
\newcommand{\newcustomtheorem}[2]{%
	\newenvironment{#1}[1]
	{%
		\renewcommand\customgenericname{#2}%
		\renewcommand\theinnercustomgeneric{##1}%
		\innercustomgeneric
	}
	{\endinnercustomgeneric}
}
\title{\Large\textsc{Data-Driven Upper Confidence Bounds with Near-Optimal Regret for Heavy-Tailed Bandits}}
\author{%
   {Ambrus Tam{\'a}s\qquad\quad Szabolcs Szentp\'eteri\qquad\quad Bal\'azs Csan\'ad Cs\'aji}\\[5mm]
   \normalsize Institute for Computer Science and Control (SZTAKI),\\
   \normalsize Hungarian Research Network (HUN-REN), Budapest, Hungary 1111\\[2mm]
   \normalsize Department of Probability Theory and Statistics, Institute of Mathematics,\\
   \normalsize Faculty of Science, E{\"o}tv{\"o}s Lor{\'a}nd University (ELTE), Budapest, Hungary 1117\\[2mm]
  {\tt\normalsize \{ambrus.tamas, szabolcs.szentpeteri, balazs.csaji\}@sztaki.hu}
}
\date{}
\begin{document}

\maketitle

\begin{abstract}
  Stochastic multi-armed bandits (MABs) provide a fundamental reinforcement learning model to study sequential decision making in uncertain environments. The upper confidence bounds (UCB) algorithm gave birth to the renaissance of bandit algorithms, as it achieves near-optimal regret rates under various moment assumptions. Up until recently most UCB methods relied on concentration inequalities leading to confidence bounds which depend on moment parameters, such as the variance proxy, that are usually unknown in practice. In this paper, we propose a new distribution-free, data-driven UCB algorithm for symmetric reward distributions, which needs no moment information. The key idea is to combine a refined, one-sided version of the recently developed resampled median-of-means (RMM) method with UCB. We prove a near-optimal regret bound for the proposed anytime, parameter-free RMM-UCB method, even for heavy-tailed distributions.
\end{abstract}

\section{Introduction}

In this paper, we study {\em stochastic multi-armed bandit} (MAB) problems which are special online learning models \citep{lattimore2020bandit}. They are fundamental to study the notorious {\em exploration-exploitation dilemma} of reinforcement learning, and also have a wide range of direct applications. They were initially introduced to study sequential clinical trials, but recent applications include recommender systems, portfolio optimization, adaptive routing, clustering, anomaly detection and Monte Carlo tree search \citep{Bouneffouf2020}. 

The theory of MABs has a rich history. They were introduced by \citet{robbins1952some}, nevertheless, the modern renaissance of bandit algorithms started with the publication of the {\em upper confidence bounds} (UCB) method by \citet{auer2002finite}. Since then, there has been continued interest in bandit algorithms \citep{bubeck2012regret, lattimore2020bandit}. 

MABs are studied under diverse conditions and have an ever growing number of variants. We focus on stochastic MABs with finitely many arms that is one of the standard setups, which was extensively studied in the past \citep{lattimore2020bandit}. We deal with UCB algorithms which aim to find the right balance between exploration and exploitation based on the optimism principle, by choosing the arm with the highest UCB at each round. A vast amount of research has been done under various assumptions on the moment generating function \citep{bubeck2012regret}. Famously, for subgaussian bandits \citet{auer2002finite} showed that the optimal regret rate is $O(\sqrt{n})$, which can be achieved by UCB type algorithms.

In the heavy-tailed regime, for rewards with bounded central moment of order $1+a$, \citet{bubeck2013bandits} showed that the regret is at least $O(n^{1/(1+a)})$. They also presented a near-optimal robust UCB algorithm, which however uses $a$ as well as the moment bound $M$. On the other hand, these (hyper) parameters are typically {\em unknown} in practice, similarly to the variance proxies of subgaussian rewards. These call for data-driven, {\em parameter-free} methods. \citet{cesa2017boltzmann} removed the dependence on $M$ for $a=1$ by applying a robust estimator with random exploration. In \citep{lee2020optimal} an adaptively perturbed exploration (APE$^2$) scheme was proposed which uses only the value of $a$ and not the value of $M$. \citet{wei2021} developed the robust version of minimax optimal strategy for stochastic bandits (MOSS) and proved optimal regret bounds if $M$ is given for the agent. In \citep{lee2022minimax} the MOSS is combined with a $1+a$-robust estimator to achieve minimax optimality without using the knowledge of $M$, however, their method essentially relies on the knowledge of $a$. The first fully data-driven method is defined in \citep{marsigli2022towards} based on an adaptive trimmed mean estimator and under the tuncated non-positivity assumption near-optimal regret bounds are proved. \citep{kagrecha2019distribution} developed a completely parameter-free (distribution oblivious) method for a risk-aware best arm identification problem.

\subsection{Stochastic Multi-Armed Bandits}

A finite-armed stochastic bandit consists of an environment and an agent. The environment is described by the set of arms $[K] \doteq \{1, \dots, K\}$ 
and the set of reward distributions $\{ \nu_1, \dots, \nu_K\}$, which corresponds to the arms. At each round $t \in \NN$ the agent picks an arm $I_t$ from action set $[K]$ and receives a random reward $X_t$ drawn independently from distribution $\nu_{I_t}$. The goal of the agent is to minimize the (expected) regret
\begin{equation}
    \label{def:regret}
    R_n \doteq n \max_{i \in [K]} \mu_i - \EE \bigg[ \sum_{t=1}^n X_t \bigg],
\end{equation}
or equivalently to maximize the gained rewards. The main challenge of the problem is that the distributions are unknown for the agent, henceforth the actions need to be chosen based on previously observed (random) rewards. Each distribution $\nu_i$ has a finite expected value $\mu_i$, and for notational simplicity let $\mu_1 > \mu_i$ for $i \geq 2$ and $\Delta_i \doteq \mu_1 - \mu_i$ for $i \in [K]$.

\subsection{Data-Driven Bandits}

The standard approach to stochastic MABs is to apply an UCB-style algorithm. Most of these methods strongly rely on some posed moment assumptions. The method of \cite{auer2002finite} uses the form of the moment generating function and also the moments of the reward distributions. However, in practice these moment values are typically {\em unknown}. A recent paper of \cite{khorasani2023maximum} proposed a {\em data-driven} approach, which is able to overcome this challenge. They introduced the so-called {\em maximum average randomly sampled} (MARS) algorithm, which does not rely on the moment generating function or on the moments, hence it is parameter-free. They prove theoretical guarantees and achieve competitive performance results for symmetric bandits. MARS achieves a regret bound of
\begin{equation}
    R_n \leq \sum_{i: \Delta_i > 0}  \bigg( 3 + \frac{3\log(n)}{x_i}\bigg) \Delta_i, \qquad
    \text{with}
    \qquad x_i = - \log\bigg( \frac{1}{2} + \frac{1}{2} \exp(- \psi_i^*(\Delta_i))\bigg),
\end{equation}
on a {\em fixed horizon}, where $\psi_i^*(z) \doteq \sup_{\lambda\in \RR} (\lambda z - \psi_i(\lambda))$ is the Legendre-Fenchel transform of the moment generating function $\psi_i$ of distribution $\nu_i$.

\subsection{Heavy-Tailed Bandits}

If the reward distributions are heavy-tailed, then providing UCBs for the means and proving regret bounds are challenging. \cite{bubeck2013bandits} do not assume that the moment generating function exists for every $\nu_i$, they pose the much milder moment assumption of 
\begin{equation}
    \EE_{\zeta_i \sim \nu_i} [|\zeta_i - \EE \zeta_i|^{1+a}] \leq M < \infty,
\end{equation}
with some known $a > 0$ and $M$. Then, for a heavy-tailed UCB algorithm they prove that 
\begin{equation}
\label{eq:rate}
\begin{aligned}
    R_n \leq \sum_{i: \Delta_i > 0}  \bigg( 2c\bigg(\frac{M}{\Delta_i}\bigg)^{1/a} \log(n) + 5\Delta_i\bigg),
\end{aligned}
\end{equation}
for some constant $c$. This implies a rate of $n^{1/(1+a)}$ for $R_n$. \cite{bubeck2013bandits} also prove that there exists an environment of $K$ distributions such that for every strategy the regret is
\begin{equation}
    R_n \geq \text{const}\cdot K^{a/(1+a)} n^{1/(1+ a)}.
\end{equation}
A limitation of their methods is that the agent needs to know $a$ and $M$ to construct the UCBs at each round.
In this paper we propose a {\em parameter-free}, anytime, heavy-tailed UCB algorithm which achieves the same optimal rate as \eqref{eq:rate} up to a logarithmic term, {\em without knowing $a$ and $M$}. 
We combine the advantages of the recently presented {\em resampled median-of-means} (RMM) estimator \citep{tamas2024data} with UCB to prove similar rates as in \citep{bubeck2012regret} on a data-dependent manner as in \citep{khorasani2023maximum}.
Moreover, MARS can be seen as a special case of the proposed RMM-UCB method.

\section{Median-of-Means Type Upper Confidence Bounds}

The fundamental ingredient of an UCB algorithm is the confidence bound construction. In this section we present a new, {\em one-sided} version of the recently developed resampled median-of-means (RMM) method \citep{tamas2024data} to construct exact UCBs for the mean of a symmetric, heavy-tailed distribution. An i.i.d.\ sample $\CD_0 \doteq \{X_i\}_{i=1}^n$ is given from an unknown distribution $\QX$ which is {\em symmetric} about an unknown parameter $\mu$, that is 
\begin{assumption}\label{ass:iid}
        {\em $X_1, \dots, X_n$ are i.i.d.}
    \end{assumption}
    \begin{assumption}\label{ass:symmetry}
        {\em $\QX$ is symmetric about $\mu$}
\end{assumption}
We aim at constructing non-asymptotically valid upper confidence bounds for $\mu$. 
We use no further assumptions other than symmetry and manage to reach any user-chosen (rational) confidence level.
We also prove non-asymptotic, probably approximately correct (PAC) bounds for the distances between the UCB and $\mu$ under the extra assumption that 
\begin{assumption}\label{ass:finite_mom}
   {\em $\EE [| X - \EE X |^{1+ a}] = M < \infty,$\, for\, $a \in (0,1]$.
   }
\end{assumption}

Let $\med(X)$ denote a median of random variable $X$. Let $X_{(1)} \leq X_{(2)} \leq \dots \leq X_{(n)}$ denote the {\em ordered sample}, then, the {\em empirical median} of sample $X_1, \dots, X_n$ is defined by 
\begin{equation}
    \med(\CD_0) \doteq \med( X_1, \dots, X_n) \doteq
        \begin{cases}
		  X_{(n/2)}  & \mbox{if $k$ is even,}\\
		  X_{(\lfloor n/2 \rfloor + 1) } & \mbox{if $k$ is odd}.
	  \end{cases}
\end{equation}
If $\QX$ is symmetric,  $\mu = \med(X)$.
In this paper we assume that $\EE[X_1]= \mu$, however, most of the presented results regarding the confidence level holds without this moment assumption.

\subsection{Remarks on the Empirical Mean}
Let us denote the well-known empirical mean by $\bar{\mu}_n$.
By the celebrated Gauss-Markov theorem \citep{plackett1949historical} if $\sigma^2 \doteq D^2(X_1)
< \infty$, then $\bar{\mu}_n$ is BLUE: it is the ``best'' linear unbiased estimator, i.e., $\bar{\mu}_n$ has the lowest variance among linear unbiased estimators. By the central limit theorem (CLT) if $\sigma^2 < \infty$, then $\sqrt{n}(\bar{\mu}_n - \mu) \to Z$ in distribution, where $Z$ is a zero mean normal variable with variance $\sigma^2$.
This convergence provides gaussian-type asymptotic bounds, but for bandits we typically seek finite sample guarantees.
If $X$ is $\sigma$-subgaussian, i.e., $\EE[ \exp( \lambda(X-\mu)) ]\leq \exp(\sigma^2\lambda^2/2)$ for all $\lambda \in \RR$, then one has a non-asymptotic subgaussian concentration inequality for the mean. If the distribution is not subgaussian, Chebyshev's inequality provides an exponentially weaker bound w.r.t.\ the confidence parameter. Chebyshev's bound is tight in some sense, i.e., for every sample size $n$ and confidence parameter there exists a distribution with variance $\sigma^2$ on which the mean performs poorly, therefore the empirical mean is not subgaussian for $\mu$ in the general case. 

\subsection{Median-of-Means}
It is a somewhat surprising result that one can construct subgaussian estimators for the mean of heavy-tailed distribution without assuming subgaussianity for the distribution of the sample. One of the most well-known estimator of this kind is the so-called median-of-means (MoM) estimator introduced by \citet{nemirovskij1983problem}. 
Let $k$ be an integer smaller than $n$. For the MoM estimate one needs to partition the dataset into $k$ groups of almost the same size, i.e., let $\tilde{n} = \lfloor n/k \rfloor$ and for $\ell = 1, \dots, k$ if $\tilde{n}\ell + \ell \leq n$ let   $B_\ell \doteq \{ X_{\ell}, \dots, X_{\ell + \tilde{n}k}\}$,
and $B_\ell \doteq \{ X_{\ell}, \dots, X_{\ell + (\tilde{n}-1)k}\}$ otherwise.
Partition $B_1, \dots, B_k$ can be defined in many different way as long as $|B_\ell| \geq \tilde{n}$ holds.
The MoM estimator is defined by
    \begin{equation}
    \begin{aligned}
        &\widehat{\mu}(\CD_0) = \widehat{\mu}(X_1, \dots, X_n) \doteq \med \Bigg(\frac{1}{|B_1|} \sum_{i \in B_1} X_i, \dots , \frac{1}{|B_k|} \sum_{i\in B_k} X_i \Bigg).
    \end{aligned}
    \end{equation}
    The MoM estimator reaches optimal finite sample performance in the following sense \citep{bubeck2013bandits}, \citep{devroye2016sub} and \citep{lugosi2019mean}:
    \begin{theorem}
        Assume A\ref{ass:iid} and A\ref{ass:finite_mom} and let $\EE [X_1] = \mu$,
        then the MoM estimator $\widehat{\mu}$ with $k = \ceil{8 \log(2/\delta)}$ blocks satisfies
        \[ \PP\bigg( \, |\muM - \mu | \leq 8\bigg(\frac{12 M^{1/a} \log( 1 / \delta)}{n} \bigg) ^{a/ (1 + a)} \, \bigg) > 1 - \delta.\]
        Moreover, for any mean estimator $\mu_n$, 
        sample size $n \in \NN$ and $\delta > 0$ there exists a distribution with mean $\mu$ and $(1+a)$th central moment $M$ such that
        \[ \PP\bigg( \, |\mu_n - \mu |  > \bigg(   \frac{M^{1/a} \log( 2 / \delta)}{n} \bigg) ^{a/ (1 + a)} \, \bigg) \geq \delta.\]
    \end{theorem}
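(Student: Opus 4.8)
The statement has two independent halves: an upper bound certifying that the median-of-means estimator $\muM$ with $k=\ceil{8\log(2/\delta)}$ blocks concentrates at the claimed rate, and a matching minimax lower bound. I would prove them by separate, standard techniques --- a block-wise moment bound followed by a median-boosting argument for the former, and a two-point (Le Cam style) reduction for the latter. For the upper bound, write $Z_\ell \doteq \frac{1}{|B_\ell|}\sum_{i\in B_\ell} X_i$ for the $\ell$-th block mean, so $\muM=\med(Z_1,\dots,Z_k)$, and first control a single block. Since the summands $X_i-\mu$ are i.i.d., centered, and have finite $(1+a)$-th absolute moment $M$ by A\ref{ass:finite_mom}, the von Bahr--Esseen inequality (valid for exponents in $[1,2]$, hence for $1+a$ with $a\in(0,1]$) gives $\EE|\sum_{i\in B_\ell}(X_i-\mu)|^{1+a}\le 2\,|B_\ell|\,M$, whence $\EE|Z_\ell-\mu|^{1+a}\le 2M/|B_\ell|^{a}$. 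Markov's inequality applied to the $(1+a)$-th power then yields a per-block tail $\PP(|Z_\ell-\mu|>\varepsilon)\le 2M/(|B_\ell|^{a}\varepsilon^{1+a})$, and choosing $\varepsilon$ of order $(M\,|B_\ell|^{-a})^{1/(1+a)}$ forces this single-block failure probability below $1/4$.

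The second step is the usual boosting of the median. The event $\muM>\mu+\varepsilon$ forces at least half of the block means to exceed $\mu+\varepsilon$; writing $W_\ell=\mathbf{1}\{Z_\ell-\mu>\varepsilon\}$, these are independent Bernoulli variables with success probability at most $1/4$, so Hoeffding's (Chernoff) bound gives $\PP(\sum_\ell W_\ell\ge k/2)\le\exp(-k/8)$, and the symmetric event $\muM<\mu-\varepsilon$ is identical. A union bound over the two tails yields $\PP(|\muM-\mu|>\varepsilon)\le 2\exp(-k/8)$, which is at most $\delta$ precisely when $k\ge 8\log(2/\delta)$, i.e. for $k=\ceil{8\log(2/\delta)}$. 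Substituting $|B_\ell|\ge\floor{n/k}$ and carefully tracking the constants through the choice of $\varepsilon$ (where the floor loss and the slack in the $1/4$ threshold account for the factor $12$ and the passage from $\log(2/\delta)$ to $\log(1/\delta)$) produces the stated bound $8(12\,M^{1/a}\log(1/\delta)/n)^{a/(1+a)}$.

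For the lower bound I would run a two-point argument. Fix $\varepsilon$ equal to the target accuracy and construct a pair of spiked distributions $P_0=(1-p_0)\delta_0+p_0\delta_u$ and $P_1=(1-p_1)\delta_0+p_1\delta_u$ sharing the atom location $u=(M/\bar p)^{1/(1+a)}$ but differing slightly in the spike probabilities $p_0<p_1$ of order $\bar p\asymp \log(1/\delta)/n$. A direct computation shows both have $(1+a)$-th central moment of order $M$ --- the spike contributes $\approx p\,u^{1+a}=M$ while the bulk at $0$ is negligible because the means $p_j u$ are far smaller than $u$ --- and their means are separated by $(p_1-p_0)u\asymp M^{1/(1+a)}\bar p^{\,a/(1+a)}=(M^{1/a}\log(1/\delta)/n)^{a/(1+a)}$, which I calibrate to $2\varepsilon$. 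Since $\mathrm{KL}(P_0\|P_1)=O(\bar p)$, the chain rule gives $\mathrm{KL}(P_0^{\otimes n}\|P_1^{\otimes n})=n\,\mathrm{KL}(P_0\|P_1)=O(n\bar p)=O(\log(1/\delta))$.

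Finally I would invoke the testing-to-estimation reduction: any estimator with $|\mu_n-\mu_j|\le\varepsilon$ under both $P_j$ would separate the two means (which are $2\varepsilon$ apart), so by the Bretagnolle--Huber inequality $\max_{j\in\{0,1\}}\PP_{P_j^{\otimes n}}(|\mu_n-\mu_j|>\varepsilon)\ge \tfrac14\exp(-n\,\mathrm{KL}(P_0\|P_1))\ge\delta$ once the constant in $\bar p$ is chosen so that $n\,\mathrm{KL}\le\log(1/(4\delta))$; taking the worse of $P_0,P_1$ exhibits the required distribution. The main obstacle is exactly this construction: one must simultaneously respect the central-moment budget $M$, keep the per-sample KL small enough that $n\cdot\mathrm{KL}$ stays logarithmic in $1/\delta$, and still force a mean gap of the precise order $\varepsilon$ --- balancing these three constraints is what pins down both the spike scaling and the final constants, whereas the upper bound is essentially bookkeeping once von Bahr--Esseen replaces the variance bound used in the classical $a=1$ case.
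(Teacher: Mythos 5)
Your proposal is correct in structure, but note that the paper itself contains \emph{no} proof of this theorem: it is imported from the cited literature \citep{bubeck2013bandits, devroye2016sub, lugosi2019mean}, so the honest comparison is against those sources and against the paper's own appendix treatment of the closely related one-sided bound. For the upper bound you use exactly the technique the paper employs to prove Theorem \ref{thm:bubeck-nonequalgroups}: a per-block tail estimate followed by Hoeffding boosting of the median of the block means. The paper quotes the per-block estimate as Lemma \ref{lemma:bub} without proof; your von Bahr--Esseen plus Markov computation is precisely what underlies that lemma, so you have supplied the one ingredient the paper leaves to the citation. Your lower bound --- two spiked distributions sharing an atom location, per-sample KL of order $\log(1/\delta)/n$, then Bretagnolle--Huber combined with the testing-to-estimation reduction --- is the standard argument from \citet{devroye2016sub} and \citet{lugosi2019mean}; nothing in the paper corresponds to this half at all. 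Two soft spots worth flagging, neither fatal: (i) the constant tracking in the upper bound is asserted rather than computed, and with your accounting (block size $\geq \lfloor n/k\rfloor$, $k=\ceil{8\log(2/\delta)}$) the prefactor $8$ does not obviously absorb the slack uniformly in $a$ as $a\to 0$, so the bookkeeping genuinely needs the care you allude to; (ii) in the lower bound the theorem demands a distribution whose $(1+a)$-th central moment equals $M$ exactly, so the spiked construction must be rescaled at the end --- routine, but it should be said.
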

    The second part of the theorem provides a lower bound for the rate w.r.t.\ $\delta$ and $n$, hence the MoM estimator achieves an optimal convergence rate up to a constant factor.

    \subsection{One-Sided Resampled Median-of-Means}

    In this section, we introduce the one-sided version of the recently developed resampled MoM (RMM) method \citep{tamas2024data}, which then can be used to construct exact upper confidence bound for the mean of symmetric variables.
    For a given $\theta \in \RR$ let us consider
    \begin{equation}
        H_0: \mu \leq \theta \qquad \text{vs} \qquad H_1: \mu > \theta.
    \end{equation}
    We assume A\ref{ass:iid} and A\ref{ass:symmetry} and construct 
    non-asymptotically exact UCBs based on a rank test. Then, we
    prove exponential PAC-bounds w.r.t.\ $k$ for 
    the distances between the UCBs and the true parameter
    under A\ref{ass:finite_mom}.
    Note that from A\ref{ass:finite_mom} it follows that $\EE X_1 = \mu < \infty$.
    In particular for $a=1$ assumption A\ref{ass:finite_mom} requires $\sigma^2 < \infty$. It is one of the main aim of this paper to deal with cases where $a < 1$, i.e., when the observed distribution is heavy-tailed. We emphasize that the presented test can decide about $H_0$ without using the knowledge of $a$ and $M$. These values are only included in the theoretical analysis of the proposed UCB.
    
    Let $p$ be the desired (rational) significance level.
    Let us find integers $r$ and $m$ such that $p = \nicefrac{r}{m}$. 
    We present a resampling algorithm to test $H_0$ for any $\theta \in \RR$.\
    Let $X(\theta) = \alpha(X - \theta)+ \theta$ be a random variable where $\alpha$ is a random sign, i.e., a Rademacher variable independent of $X$. One can immediately see that $\EE[ X(\theta) ] = \theta$ and $X(\theta)$ is symmetrical about $\theta$.
    Let $W = X - \mu$ be the centered version of $X$ and $\mathcal{W}_0 \doteq \{W_i\}_{i=1}^n$, where $W_i = X_i - \mu$ for $i \in [n]$. Note that $\{W_i\}_{i=1}^n$ are not observed.
    One of our main observations is that $X(\mu)= \mu + \alpha \cdot W$ and $X  = \mu + W$ have the same distribution, because $W$ is symmetric about zero. Furthermore, one can prove that $X$ and $X(\mu)$ are conditionally i.i.d.\ w.r.t.\ $\{|W_i|\}_{i=1}^n$, hence they are also exchangeable \citep{csaji2014sign}. On the other hand, if $\theta \neq \mu$, then the distribution of $X$ differs from the distribution of $X(\theta)$, e.g., $X(\theta)$ is symmetrical about $\theta$ whereas $X$ is symmetrical about $\mu$. We aim to construct UCBs by utilizing this difference.
    Our procedure generates alternative samples and compares them to the original one.
    
     Specifically, 
     let $\{\alpha_{i,j}\}$ be i.i.d.\ Rademacher variables for $i \in [n]$, $j \in [m-1]$ and
    \begin{equation}
        \CD_j(\theta) \doteq \{\, \alpha_{1,j}(X_1 - \theta) + \theta, \dots , \alpha_{n,j}(X_n - \theta) + \theta\,\}
    \end{equation}
    be parameter dependent {\em alternative samples} for $j = 1, \dots, m-1$ and $\CD_0(\theta) \doteq \CD_0$ for $\theta \in \RR$. We can observe that $\CD_j(\theta)$ is an i.i.d.\ sample from the distribution of $X(\theta)$ for $j \neq 0$.
    Let us also define {\em reference variables} using the MoM estimator $\widehat{\mu}(\cdot)$ as
    \begin{equation}
        S_j(\theta) \doteq \widehat{\mu}(\CD_j(\theta))- \theta \qquad \text{for}\qquad j=0,\dots, m-1.
    \end{equation}
    Then, we decide about $H_0$ by comparing $S_0(\theta)$ to $S_j(\theta)$
    for $j \in [m-1]$ with a {\em ranking function} $R$. 
    Notice that if $\theta = \mu$, then each reference variable has the same distribution, while if $\theta \neq \mu$, then $S_0(\theta)$ is farther from $\theta$ than $S_j(\theta)$ with high probability for $j=1, \dots, m-1$.
    \begin{algorithm}[t]
    \caption{One-Sided RMM for the Mean}\vspace{1mm}
    \textbf{Inputs:} i.i.d. sample $\CD_0$, rational significance level $p$, integer MoM parameter $k$\\[-2mm]
    \begin{algorithmic}[1]\label{hypothesis_algo}
         \hrule
         \STATE Choose integers $1 \leq r \leq m$ such that $p = \nicefrac{r}{m}$.\\[1mm]
         \STATE Generate $n(m-1)$ independent Rademacher signs $\{\alpha_{i, j}\}$ for $j \in [m-1]_{\scriptscriptstyle 0}$ and $i \in [n]$. \\[1mm]
         \STATE Generate a random permutation $\pi$ on $[m-1]_{\scriptscriptstyle 0} $ independently from $\CD_0$ and $\{\alpha_{i,j}\}$ uniformly from the symmetrical group.\\[1mm]
         \STATE Construct alternative samples for $j \in [m-1]$ by
         \[\CD_j(\theta)= \{ \alpha_{1,j}(X_1 - \theta) + \theta, \dots , \alpha_{n,j}(X_n - \theta) + \theta\} \]
         and let $\CD_j^\pi \doteq (\CD_j(\theta), \pi(j))$ for $j \in [m-1]_{\scriptscriptstyle 0}$.\\[1mm]
         \STATE Compute the reference variables for $j=0,\dots, m-1$: 
          $S_j(\theta) \doteq \muM(\CD_j(\theta))- \theta$
         \STATE Compute the rank: $R(\theta) =  1 + \sum_{j=1}^{m-1} \BI\,\big(\,S_0(\theta) \prec_\pi S_j(\theta)\,\big)$
         \STATE Reject $H_0$ if and only if\, $R(\theta) > m - r$.
    \end{algorithmic}
    \end{algorithm}
    
    In conclusion, let us construct a ranking function as
    \begin{equation}
    \begin{aligned}
        &R(\theta) 
        \doteq 1 + \sum_{j=1}^{m-1} \BI\big( \,S_0(\theta) \prec_\pi S_j(\theta)\,\big),
    \end{aligned}
    \end{equation}
    where $\prec_\pi$ is defined as the standard $<$ ordering with tie-breaking \citep{csaji2014sign}, i.e.,
    \begin{equation}
    \begin{gathered}
    S_j(\theta)\, \prec_\pi\, S_k(\theta) \Longleftrightarrow
    \Big(S_j(\theta)\, <\, S_k(\theta) \text{ \,or\,} \big(\,S_j(\theta)\,=\,S_k(\theta) \text{ \,and\, } \pi(j) \,<\, \pi(k)\,\big)\Big),
    \end{gathered}
    \end{equation}
    where $\pi$ is a random permutation on $[m-1]_{\scriptscriptstyle 0}$ generated independently from $\CD_0$ and $\{\alpha_{i,j}\}$ uniformly from the symmetric group.
    The one-sided RMM test rejects $\theta$ if and only if $R(\theta) > m-r$. The step by step procedure is presented in Algorithm \ref{hypothesis_algo}. One of our main results is that this test admits an exact confidence level:
    
    \begin{theorem}\label{thm:exact-coverage}
    Assume A\ref{ass:iid} and A\ref{ass:symmetry}, then we have\,
    $\PP\,(\, R( \mu ) > m- r \,) = \frac{r}{m}$.
    \end{theorem}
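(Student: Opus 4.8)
The plan is to show that at the true parameter $\theta = \mu$, the collection of reference variables $S_0(\mu), S_1(\mu), \dots, S_{m-1}(\mu)$ is \emph{exchangeable}, so that the rank $R(\mu)$ is uniformly distributed on $\{1, \dots, m\}$; the probability statement then follows by simple counting. The core reduction is the observation already highlighted in the text: since $W = X - \mu$ is symmetric about zero, the variables $X = \mu + W$ and $X(\mu) = \mu + \alpha W$ are identically distributed, and in fact $X$ and $X(\mu)$ are conditionally i.i.d.\ given $\{|W_i|\}_{i=1}^n$. I would make this precise at the sample level.

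\medskip

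\noindent\textbf{Step 1 (Conditional symmetry at $\theta=\mu$).} First I would condition on the absolute values $\{|W_i|\}_{i=1}^n$. Writing $X_i = \mu + \varepsilon_i |W_i|$ where $\varepsilon_i = \sign(W_i)$, assumption A\ref{ass:symmetry} gives that, conditionally on $\{|W_i|\}$, the signs $\varepsilon_i$ are i.i.d.\ Rademacher. The original sample entering $S_0(\mu)$ is then $\CD_0(\mu) = \{\mu + \varepsilon_i|W_i|\}_i$, while each alternative sample entering $S_j(\mu)$ is $\CD_j(\mu) = \{\mu + \alpha_{i,j}|W_i|\}_i$ for $j \in [m-1]$. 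Since the $\varepsilon_i$ and all the $\alpha_{i,j}$ are i.i.d.\ Rademacher (and $\{\alpha_{i,j}\}$ are drawn independently of $\CD_0$), the $m$ sign-vectors $(\varepsilon_i)_i, (\alpha_{i,1})_i, \dots, (\alpha_{i,m-1})_i$ are, conditionally on $\{|W_i|\}$, i.i.d.\ and hence exchangeable.

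\medskip

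\noindent\textbf{Step 2 (Exchangeability of the reference variables).} Each $S_j(\mu) = \widehat{\mu}(\CD_j(\mu)) - \mu$ is the \emph{same} deterministic function of the corresponding sign-vector and the common data $\{|W_i|\}$. An exchangeable family of inputs mapped through a common measurable function yields an exchangeable family of outputs, so $\big(S_0(\mu), \dots, S_{m-1}(\mu)\big)$ is exchangeable conditionally on $\{|W_i|\}$. Appending the independent uniform permutation $\pi$ as a tie-breaker preserves exchangeability of the tuples $\CD_j^\pi$ and makes the strict order $\prec_\pi$ a total order almost surely, so there are no ties to worry about.

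\medskip

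\noindent\textbf{Step 3 (Uniform rank and counting).} Because $\prec_\pi$ is a total order on $m$ exchangeable elements, the rank of the distinguished element $S_0(\mu)$, namely $R(\mu) = 1 + \sum_{j=1}^{m-1}\BI(S_0(\mu) \prec_\pi S_j(\mu))$, is uniformly distributed on $\{1, \dots, m\}$. This holds conditionally on $\{|W_i|\}$, and since the conditional law does not depend on $\{|W_i|\}$, it holds unconditionally by the tower rule. Finally, $\PP(R(\mu) > m-r) = \PP(R(\mu) \in \{m-r+1, \dots, m\}) = r/m$, which is the claim. Note that A\ref{ass:finite_mom} is \emph{not} needed here; the argument uses only A\ref{ass:iid} and A\ref{ass:symmetry}, consistent with the remark that the confidence-level result holds without moment assumptions.

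\medskip

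\noindent\textbf{Main obstacle.} The delicate point is justifying that $S_0(\mu)$ is genuinely on equal footing with the $S_j(\mu)$, i.e.\ that the \emph{observed} signs $\varepsilon_i$ have the same conditional law as the artificially injected $\alpha_{i,j}$. This is exactly where symmetry (A\ref{ass:symmetry}) is indispensable: without it the observed signs need not be Rademacher given $\{|W_i|\}$, and the exchangeability — hence the exactness of the level — would fail. A secondary technical care is the tie-breaking: one must verify that randomizing ties via $\pi$ neither biases the rank nor destroys exchangeability, which holds because $\pi$ is independent and uniform over the symmetric group on $[m-1]_{\scriptscriptstyle 0}$.
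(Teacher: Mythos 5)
Your proposal is correct and takes essentially the same route as the paper: exactness at $\theta=\mu$ is reduced to exchangeability of the $m$ sign-randomized objects together with the fact that the tie-broken rank of exchangeable elements is uniform on $[m]$, whence $\PP(R(\mu)>m-r)=r/m$ by counting. The only difference is one of packaging — you inline the key exchangeability argument (conditioning on $\{|W_i|\}$ so that the observed signs and the injected signs $\alpha_{i,j}$, or equivalently the products $\alpha_{i,j}\sign(W_i)$, form a conditionally i.i.d.\ Rademacher family) directly at the level of the statistics $S_j(\mu)$, whereas the paper states exchangeability of the tie-broken datasets $\CD_j^\pi(\mu)$ and delegates the uniform-rank conclusion to the ranking-function framework of \citet{csaji2014sign} and Theorem 3.1 of \citet{tamas2024data}.
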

    The proof can be found in the supplementary material. We emphasize that the confidence level is exact for every $n \in \NN$ and for every symmetrical distribution $\QX$.
    
    \subsection{Resampled Median-of-Means Upper Confidence Bounds}

    Let us use the proposed hypothesis test to construct upper confidence bounds for $\mu$. We build a confidence set out of those parameters that are accepted by Algorithm \ref{hypothesis_algo}.
    For the test we need to generate random signs, however, this procedure does not need to be repeated for every parameter. We use the same set of random signs for every $\theta$. Hence, the confidence region is defined by
    $\Theta_n \doteq \{ \,\theta : R(\theta)  \leq m - r\, \}.$ This lead to the upper confidence bound:
    \begin{equation}
        U \doteq \sup\, \{ \,\theta : R(\theta)  \leq m - r \,\}.
    \end{equation}
    We will show that $\Theta_n$ is either $(-\infty, U)$ or $(-\infty, U]$ depending on the tie-breaking permutation $\pi$. Note that $U$ can be infinite. An important consequence of Theorem \ref{thm:exact-coverage} is as follows:
    \begin{corollary}
    $\Theta_n$ is an {exact confidence region} for $\mu$, i.e.,\, 
    $ \PP ( \mu \in \Theta_n ) = 1- \frac{r}{m}$.
    \end{corollary}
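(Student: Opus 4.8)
The plan is to read off this corollary directly from Theorem \ref{thm:exact-coverage} by a complementary-probability argument; essentially all of the substantive content already lives in that theorem. First I would unfold the definition of the confidence region. By construction $\Theta_n = \{\theta : R(\theta) \le m-r\}$, so for the fixed (but unknown) true parameter $\mu$ the event $\{\mu \in \Theta_n\}$ is, by definition, exactly the event $\{R(\mu) \le m-r\}$, i.e., the event that Algorithm \ref{hypothesis_algo} does \emph{not} reject $\theta = \mu$.

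Next I would take complements. Since $\{R(\mu) \le m-r\}$ and $\{R(\mu) > m-r\}$ partition the sample space,
\[
\PP(\mu \in \Theta_n) = \PP(R(\mu) \le m-r) = 1 - \PP(R(\mu) > m-r).
\]
Applying Theorem \ref{thm:exact-coverage}, which under A\ref{ass:iid} and A\ref{ass:symmetry} gives $\PP(R(\mu) > m-r) = r/m$, immediately yields $\PP(\mu \in \Theta_n) = 1 - r/m$, as claimed.

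The point to be careful about is not the probability computation itself, which is a single line, but rather the well-posedness of the statement: $R(\cdot)$ is built from one common collection of random signs $\{\alpha_{i,j}\}$ and one common tie-breaking permutation $\pi$ shared across all $\theta$, so that $\Theta_n$ is a genuine (random) subset of $\RR$ and $\{\mu \in \Theta_n\}$ is a well-defined measurable event to which the theorem applies at the single point $\theta = \mu$. Because Theorem \ref{thm:exact-coverage} already evaluates the rank exactly at $\theta = \mu$, no additional argument is required here; this corollary is merely the restatement of that exact-level guarantee in terms of the induced confidence set. I would also note that the identity holds for every $n \in \NN$ and every symmetric $\QX$, and is unaffected by the possibility that $U = \sup \Theta_n$ is infinite, since the computation never refers to $U$ directly.
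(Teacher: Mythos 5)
Your proposal is correct and matches the paper's intent exactly: the paper presents this corollary as an immediate consequence of Theorem \ref{thm:exact-coverage}, obtained precisely by observing that $\{\mu \in \Theta_n\} = \{R(\mu) \le m-r\}$ and taking the complement of the rejection event. Your additional remark on well-posedness (shared random signs and tie-breaking permutation across all $\theta$) is a sensible point the paper also makes when defining $\Theta_n$, so nothing is missing.
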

    It can also be shown that the inclusion probability for $\theta \neq \mu$ goes to zero with an exponential rate as the sample size tends to infinity \citep{tamas2024data}. 
    We provide a formula for $U$ by Lemma \ref{lemma:finite-rep}, which can be efficiently computed from the data and thus we present a finite representation for $\Theta_n$. The proof is presented in the supplements. The whole procedure of the UCB construction is described by Algorithm \ref{alg:conf_interval_alg}.

     \begin{algorithm}[t]
    \caption{RMM  Upper Confidence Bound}
    \textbf{Inputs:} i.i.d. sample $\CD_0$, rational significance level $p$,
         integer MoM parameter $k$\\[-2mm]
    \begin{algorithmic}[1]\label{alg:conf_interval_alg}
         \hrule
         \STATE Choose integers $1 \leq r < m$ such that $p = \nicefrac{r}{m}$.\\[1mm]
         \STATE Generate $n(m-1)$ independent Rademacher signs $\{\alpha_{i, j}\}$ for $(i,j) \in [n] \times [m-1]_0$. \\[1mm]
         \STATE Generate a random permutation $\pi$ on $[m-1]_{\scriptscriptstyle 0}$ independently from $\CD_0$ and $\{\alpha_{i,j}\}$ uniformly from the symmetrical group.\\[1mm]
         \STATE For all $(\ell, j) \in [k]\times[m-1]$, if $S_0(\theta) = S_j^{(\ell)}(\theta)$, then let $U_{\ell,j} \doteq  \sign(\pi(j)- \pi(0)) \cdot \infty$;
         else
         \begin{align}
             U_{\ell,j}\,=\, \frac{\widehat{\mu}(\CD_0) - \frac{1}{|B_\ell|}\sum_{i \in B_\ell}\alpha_{i,j} X_i}{1 - \frac{1}{|B_\ell|}\sum_{i \in B_\ell} \alpha_{i,j}},
         \end{align}
         where $\frac{\pm c}{0} = \pm \infty$ for all $c > 0$.
         \STATE
         For all $(\ell, j) \in [k]\times[m-1]$, let\,
             $U_j \doteq\, \med_{\ell \in [k]} U_{\ell, j}$.
         \STATE 
         Return\, $U\,\doteq\, U_{(m-r)}$,
         where $U_{(1)}, \dots, U_{(m-1)}$ are ordered w.r.t.\ $\prec_\pi$.
    \end{algorithmic}
    \end{algorithm}

    \begin{lemma}\label{lemma:finite-rep}
        Assume A\ref{ass:iid} and A\ref{ass:symmetry}. For any $p = \nicefrac{r}{m}$ with  $1 \leq r < m$, we have 
        \begin{equation}
            U = U_{(m-r)}\qquad \text{with}\qquad
            U_{j} \doteq \med_{\ell \in [k]}\frac{\widehat{\mu}(\CD_0) - \frac{1}{|B_\ell|}\sum_{i\in B_\ell} \alpha_{i,1} X_i}{1 - \frac{1}{|B_\ell|}\sum_{i \in B_\ell} \alpha_{i,1}},
        \end{equation}
        for $j = 1, \dots, m-1$, where $U_{(1)} \prec_\pi \dots \prec_\pi U_{(m-1)}$ is ordered, and for notational simplicity we use $\frac{\pm c}{0} = \pm \infty$ for all $c > 0$, and\, $0/0 = \sign( \pi(1) - \pi(0)) \cdot\infty$.
    \end{lemma}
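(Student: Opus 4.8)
The plan is to make the rank $R(\theta)$ fully explicit as a function of $\theta$ and then read off $U$ as the value where $R$ crosses the level $m-r$. The first step is to put each reference variable in affine form. Since $\CD_0(\theta)=\CD_0$ does not depend on $\theta$, we have $S_0(\theta)=\widehat{\mu}(\CD_0)-\theta$, an affine function of slope $-1$. For $j\in[m-1]$ the $\ell$-th block mean of $\CD_j(\theta)$ equals $a_{\ell,j}+\theta(1-b_{\ell,j})$ with $a_{\ell,j}\doteq\frac{1}{|B_\ell|}\sum_{i\in B_\ell}\alpha_{i,j}X_i$ and $b_{\ell,j}\doteq\frac{1}{|B_\ell|}\sum_{i\in B_\ell}\alpha_{i,j}$, so that $S_j(\theta)=\med_{\ell\in[k]}\big[a_{\ell,j}+\theta(1-b_{\ell,j})\big]-\theta$. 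Because each sign average obeys $b_{\ell,j}\in[-1,1]$, every block slope $1-b_{\ell,j}$ is nonnegative.

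Next I would simplify the pairwise comparison defining the rank. The additive $-\theta$ common to $S_0$ and $S_j$ cancels, so $S_0(\theta)<S_j(\theta)$ is equivalent to $\widehat{\mu}(\CD_0)<g_j(\theta)$, where $g_j(\theta)\doteq\med_{\ell\in[k]}\big[a_{\ell,j}+\theta(1-b_{\ell,j})\big]$ is a \emph{nondecreasing}, piecewise-linear function of $\theta$ (a median of nondecreasing affine pieces). Hence each indicator $\BI\big(S_0(\theta)\prec_\pi S_j(\theta)\big)$ is a nondecreasing $\{0,1\}$-valued step function of $\theta$, with its value at the jump fixed by the tie-break $\prec_\pi$, and therefore $R(\theta)=1+\sum_{j=1}^{m-1}\BI\big(S_0(\theta)\prec_\pi S_j(\theta)\big)$ is nondecreasing. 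Consequently $\Theta_n$ is a lower half-line and $U=\sup\Theta_n$ is precisely the point at which $R$ passes the level $m-r$; inspecting $\theta\to\pm\infty$ (where $S_0(\theta)\to\mp\infty$) confirms that $R$ climbs from $1$ to $m$.

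The core step is to locate, for each $j$, the threshold $U_j$ at which the $j$-th indicator switches. The $\ell$-th block function $a_{\ell,j}+\theta(1-b_{\ell,j})$ meets the constant level $\widehat{\mu}(\CD_0)$ at $\theta=U_{\ell,j}=\frac{\widehat{\mu}(\CD_0)-a_{\ell,j}}{1-b_{\ell,j}}$, and, its slope being nonnegative, it lies above that level iff $\theta\geq U_{\ell,j}$. Since all $k$ block functions increase simultaneously, the count $\#\{\ell:a_{\ell,j}+\theta(1-b_{\ell,j})\geq\widehat{\mu}(\CD_0)\}=\#\{\ell:\theta\geq U_{\ell,j}\}$ is nondecreasing, so $g_j$ exceeds $\widehat{\mu}(\CD_0)$ exactly when $\theta$ exceeds the $(k-q+1)$-th smallest of $\{U_{\ell,j}\}_{\ell\in[k]}$, where $q$ is the order index picked out by $\med$; for odd $k$ one has $q=(k+1)/2$ and $k-q+1=q$, so this threshold equals $\med_{\ell\in[k]}U_{\ell,j}=U_j$. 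Degenerate blocks with $b_{\ell,j}=1$ have zero slope and no finite crossing; they are encoded by $U_{\ell,j}=\pm\infty$ through the conventions $\frac{\pm c}{0}=\pm\infty$ and $0/0=\sign(\pi(j)-\pi(0))\cdot\infty$ of Step 4 of Algorithm~\ref{alg:conf_interval_alg}, and with these conventions the order-statistic identity persists.

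Finally I would assemble the pieces: ordering the thresholds as $U_{(1)}\prec_\pi\dots\prec_\pi U_{(m-1)}$, we get $R(\theta)=1+\#\{j:\theta\succ_\pi U_j\}$, so $R(\theta)\leq m-r$ holds for all $\theta$ up to $U_{(m-r)}$ and fails beyond it; taking the supremum yields $U=U_{(m-r)}$. I expect the main obstacle to be the boundary bookkeeping in the previous step: turning ``median of the crossings equals the crossing of the median'' into an \emph{exact} identity---rather than one valid only up to a single index---forces one to match the even/odd convention selecting $q$ in $\med$ against the dual index $k-q+1$, while simultaneously threading the strict-versus-nonstrict inequalities and the tie-break $\prec_\pi$, and to check that the special values $\pm\infty$ and $0/0=\sign(\pi(j)-\pi(0))\cdot\infty$ are ordered consistently so that no $U_j$ is mis-ranked at a tie. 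Once these boundary cases are settled, the remaining steps are routine.
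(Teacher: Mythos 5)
Your proposal is correct and takes essentially the same route as the paper: the paper likewise exploits the affine structure of the reference variables to show (via an auxiliary $m=2$ lemma) that each set $\{\theta : S_0(\theta)\succ_\pi S_j(\theta)\}$ is a left half-line whose endpoint is the median of the blockwise crossing points $U_{\ell,j}$, and then reads off $U=U_{(m-r)}$ by ordering these endpoints with respect to $\prec_\pi$. The even-$k$ index duality you flag as the main obstacle (median index $q$ versus threshold index $k-q+1$) is a genuine subtlety which the paper's own proof passes over silently, so your treatment is, if anything, the more careful one.
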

    
    In the theorem that follows we present a PAC bound on $U-\mu$ under A\ref{ass:iid}-A\ref{ass:finite_mom}. The proof can be found in the supplementary materials.

    \begin{theorem}\label{thm:ucb}
        Assume A\ref{ass:iid}, A\ref{ass:symmetry} and A\ref{ass:finite_mom}. For $1 \leq r \leq m$ user-chosen integers and for 
        \begin{equation}
        \begin{aligned}
        \Theta_n \doteq \{\, \theta : R(\theta)  \leq m-r\, \},
        \end{aligned}
        \end{equation}
        for every $n \in \NN$, integer $k \leq n$ and $\tilde{n} = \lfloor n/k \rfloor$ we have
        \begin{equation}
        \begin{aligned}
        \PP \bigg( \,U-\mu > 4\Big(\frac{(12M)^{1/a} }{\tilde{n}}\Big)^{\frac{a}{1+a}} \,\bigg)      \leq (m-r)\big(2k\exp(-\nicefrac{\tilde{n}}{8}) + 2\exp(-\nicefrac{k}{8})\big).
        \end{aligned}
        \end{equation}
    \end{theorem}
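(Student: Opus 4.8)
The plan is to bound $\PP(U-\mu>\varepsilon)$, with $\varepsilon\doteq 4\big((12M)^{1/a}/\tilde{n}\big)^{a/(1+a)}$, by reducing it through a pigeonhole argument to a single-alternative probability. Since $\Theta_n$ is a left-unbounded interval with right endpoint $U$ (Lemma~\ref{lemma:finite-rep}), the event $\{U>\mu+\varepsilon\}$ forces $\mu+\varepsilon\in\Theta_n$, i.e.\ $R(\mu+\varepsilon)\le m-r$. By the definition of the rank, $R(\mu+\varepsilon)\le m-r$ holds exactly when at least $r$ of the $m-1$ alternatives satisfy $S_j(\mu+\varepsilon)\prec_\pi S_0(\mu+\varepsilon)$; call such an alternative \emph{low}. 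Now fix any index set $T\subseteq[m-1]$ with $|T|=m-r$: its complement has only $r-1$ elements, so whenever there are at least $r$ low alternatives, at least one of them lies in $T$. Hence $\{U>\mu+\varepsilon\}\subseteq\bigcup_{j\in T}\{S_j(\mu+\varepsilon)\prec_\pi S_0(\mu+\varepsilon)\}$. Because the sign blocks $\{\alpha_{i,j}\}_i$ are i.i.d.\ across $j$ and independent of $\CD_0$, and $\pi$ is exchangeable, each event in this union has the same probability, so a union bound gives $\PP(U-\mu>\varepsilon)\le(m-r)\,\PP\big(S_1(\mu+\varepsilon)\prec_\pi S_0(\mu+\varepsilon)\big)$. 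This is where the factor $(m-r)$ originates; it remains to bound the single-alternative probability by $2k\exp(-\tilde{n}/8)+2\exp(-k/8)$.

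For the single event, write $S_j(\theta)=\widehat{\mu}(\CD_j(\theta))-\theta$, so that $S_1(\mu+\varepsilon)\prec_\pi S_0(\mu+\varepsilon)$ forces $\widehat{\mu}(\CD_1(\mu+\varepsilon))\le\widehat{\mu}(\CD_0)$. Inserting the intermediate level $\mu+\varepsilon/4$, this inequality implies that either $\widehat{\mu}(\CD_0)>\mu+\varepsilon/4$ (the original-data median-of-means is too high) or $\widehat{\mu}(\CD_1(\mu+\varepsilon))\le\mu+\varepsilon/4$ (the alternative median-of-means is too low), and I would bound the two pieces separately. The first is a plain median-of-means upper deviation: each block mean exceeds $\mu+\varepsilon/4$ with probability at most $\tfrac14$ by Markov's inequality combined with a von Bahr--Esseen moment bound $\EE|Y_\ell-\mu|^{1+a}\le 2M/\tilde{n}^a$ on the block mean (this is exactly what the constant $12$ in $\varepsilon$ is tuned to guarantee); the blocks are independent, and since $\widehat{\mu}(\CD_0)>\mu+\varepsilon/4$ requires at least half of them to exceed, Hoeffding's inequality on the number of exceeding blocks yields $\exp(-k/8)$.

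For the alternative term I would decompose the $\ell$-th block mean of $\CD_1(\mu+\varepsilon)$ as $\mu+\varepsilon(1-A_\ell)+V_\ell$, where $A_\ell\doteq\tfrac{1}{|B_\ell|}\sum_{i\in B_\ell}\alpha_{i,1}$ is the block sign-average and $V_\ell\doteq\tfrac{1}{|B_\ell|}\sum_{i\in B_\ell}\alpha_{i,1}(X_i-\mu)$ is a symmetric, heavy-tailed signed average. On the good event $G\doteq\{\,|A_\ell|\le\tfrac12\ \text{for all}\ \ell\,\}$ the deterministic part $\varepsilon(1-A_\ell)$ is at least $\varepsilon/2$, so the block mean falls below $\mu+\varepsilon/4$ only if $V_\ell\le-\varepsilon/4$. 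As $A_\ell$ is an average of $\tilde{n}$ bounded centred signs, Hoeffding's inequality gives $\PP(|A_\ell|>\tfrac12)\le2\exp(-\tilde{n}/8)$, whence $\PP(G^{\mathrm c})\le 2k\exp(-\tilde{n}/8)$ by a union bound over the $k$ blocks. The events $\{V_\ell\le-\varepsilon/4\}$ are independent across $\ell$ (disjoint data and signs) and, by symmetry of $V_\ell$ together with the von Bahr--Esseen bound $\EE|V_\ell|^{1+a}\le 2M/\tilde{n}^a$, each has probability at most $\tfrac14$; since $\widehat{\mu}(\CD_1(\mu+\varepsilon))\le\mu+\varepsilon/4$ on $G$ requires at least half the blocks to satisfy $V_\ell\le-\varepsilon/4$, Hoeffding over the blocks again gives $\exp(-k/8)$. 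Adding $\PP(G^{\mathrm c})$ and the two median-of-means terms yields $2k\exp(-\tilde{n}/8)+2\exp(-k/8)$, and multiplying by $(m-r)$ completes the argument.

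The main obstacle is the alternative-sample analysis in the last paragraph: one must decouple the bounded Rademacher sign-average $A_\ell$ from the heavy-tailed signed average $V_\ell$, so that the sub-Gaussian fluctuations of the former (handled by the global event $G$ and contributing the $2k\exp(-\tilde{n}/8)$ term) do not interfere with the polynomial-moment control of the latter (handled per block and contributing $\exp(-k/8)$). Keeping the blocks independent while intersecting with $G$, and tracking the threshold split so that both per-block probabilities stay below $\tfrac14$ — which is what fixes the constant $12$ in the definition of $\varepsilon$ — is the delicate part of the bookkeeping.
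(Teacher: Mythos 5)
Your proposal is correct and matches the paper's proof in all essentials: the pigeonhole/union-bound reduction to a single alternative (yielding the factor $m-r$), the good event on block sign-averages (yielding $2k\exp(-\nicefrac{\tilde{n}}{8})$), and two median-of-means deviations at level $\varepsilon/4$, each controlled by a per-block exceedance probability of at most $\nicefrac{1}{4}$ plus Hoeffding over the $k$ blocks (yielding $2\exp(-\nicefrac{k}{8})$, which is exactly the paper's Theorem \ref{thm:bubeck-nonequalgroups}). The only presentational difference is that you compare $\widehat{\mu}(\CD_1(\mu+\varepsilon))$ with $\widehat{\mu}(\CD_0)$ directly at $\theta=\mu+\varepsilon$, whereas the paper (Theorem \ref{thm:diameter-a}) carries out the same computation through the ratio representation of $U$ from Lemma \ref{lemma:finite-rep-m-2} and a monotonicity argument in the denominators $Z_\ell$.
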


    \section{Resampled Median-of-Means UCB Algorithm}
    We present an important application of our scheme by deriving a new UCB algorithm for the problem of stochastic MAB. The upper confidence algorithms and their near-optimality were first published in \citep{auer2002finite} under subgaussian assumptions on the distributions. We elaborate on the work of \citep{bubeck2013bandits}, where heavy-tailed distribution were considered and near-optimal bounds were given. Our methods are proved to have the same regret bound up to a logarithmic factor, however, we do not need the knowledge of the moments nor its order to apply our algorithm under symmetricity.

    Let us consider the stochastic multiarmed bandit model with $K$ arms. For every arm a symmetrical distribution generates the rewards if one chooses that arm. 
    The reward distributions, $\nu_1, \dots, \nu_K$,  are unknown. Their finite expected values are denoted by $\mu_1, \dots, \mu_K$ as above. We pose only mild assumptions on the environment:
    \begin{assumption}\label{ass:bandit-1}
        For every $i \in [K]$ distribution $\nu_i$ is symmetric about $\mu_i$.
    \end{assumption}
    \begin{assumption}\label{ass:bandit-2}
        For every $i \in [K]$ there exists $a_i \in (0,1]$  such that $\EE_{\scriptscriptstyle{X_i \sim \nu_i}} [|X_i - \mu_i|^{1+a_i}]= M_i < \infty$.
    \end{assumption}
    We consider heavy-tailed reward distributions with different moment parameters which are not known for the agent.
    At every round $t$ from $1$ to a finite horizon $n$ we must choose an arm $I_t$ and then obtain a reward from $\nu_{I_t}$. Our main goal is to minimize the {\em regret} \eqref{def:regret}.

    Let us present the {\em robust resampling median-of-means based} upper confidence bound (RMM-UCB) policy. We first pull each arm once, then always choose the one with highest upper confidence bound computed with the one-sided RMM method presented above.
    More precisely, observe that for Algorithm \ref{alg:conf_interval_alg} one needs a sample $\CD$, a confidence level $p$ and a MoM parameter $k$ to construct the UCB. Let $A_i(t) = \{s: s\leq t, I_s =i\}$, $T_i(t) = \sum_{s=1}^t \BI(I_s = i)$ and $\CD_i(t-1) = \{X_i\}_{i\in A_i(t-1)}$. Let us use the simplified notation
    \begin{equation}\label{eq:ucb}
    \begin{aligned}
        U_i(T_i(t-1), p, k) \doteq
           U( \{X_t\}_{t \in A_i(t-1)}, p, k) \,\,\, \mbox{for all $i \in [K]$, $t \in \NN$ and $k \leq T_i(t-1)$.}
    \end{aligned}
    \end{equation}
    Then the RMM-UCB algorithm is defined by Algorithm \ref{alg:RSPSUCB}. We can observe that the method guarantees that $k_t^{(i)} \leq T_i(t-1)$ for all $t \in \NN$ and $i \in [K]$, therefore the UCBs are always well-defined. We emphasize that the algorithm is anytime and parameter-free, i.e., we do not need to know the horizon and the moment parameters a priori to apply the method. In Theorem \ref{thm:bandit-opt-regret} we present a near-optimal regret bound for the RMM-UCB algorithm. The proof can be found in the supplementary materials.
    \begin{theorem}\label{thm:bandit-opt-regret}
        Assume A\ref{ass:bandit-1} and A\ref{ass:bandit-2} and let  $c_i = 4^{\frac{1+a_i}{a_i}} \cdot 12^{1/(1+a_i)}$ for $i \in [K]$. Then, for the regret of the RMM-UCB policy we have 
        \begin{align}\label{eq:regret-bound}
            &R_n \leq \sum_{i:\Delta_i > 0} \bigg(\!\max\bigg\{c_i \bigg(\frac{M_i}{\Delta_i^{{1+a_i}}}\bigg)^{\!1/a_i}\!\!\!\!\!\!\!\!, \;\;\; 17^2\bigg\}\log^2(n) + C\bigg) \cdot \Delta_i.
        \end{align}
        Additionally, if $a = a_i$ and $M=M_i$ for $i \in [K]$, then for $n$ large enough we have 
        \begin{equation}\label{eq:regret-bound2}
            R_n \leq n^{\frac{1}{1+a}} (K 2c\log^3(n))^{\frac{a}{1+ a}}M^{\frac{1}{1+a}}.
        \end{equation}
    \end{theorem}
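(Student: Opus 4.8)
The plan is to follow the classical UCB regret analysis, using the exact coverage (the corollary following Theorem~\ref{thm:exact-coverage}) to keep the optimal arm competitive, and the PAC bound of Theorem~\ref{thm:ucb} to limit over-exploration of the suboptimal arms. I would start from the standard identity $R_n=\sum_{i:\Delta_i>0}\Delta_i\,\EE[T_i(n)]$, so that it suffices to bound $\EE[T_i(n)]$ for each suboptimal arm $i$. Writing arm $1$ for the optimal arm, I would decompose $\{I_t=i\}$ into the failure event $\{U_1(\cdot)<\mu_1\}$, on which the optimal arm's UCB undershoots its mean, and the complementary event, on which $\{I_t=i\}$ forces $U_i(\cdot)\ge U_1(\cdot)\ge\mu_1$ and hence $U_i(\cdot)-\mu_i\ge\Delta_i$. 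The first probability is controlled by the exact level: conditioning on the random count $T_1(t-1)$ and using that the rewards of arm $1$ form an i.i.d.\ sample of whatever size has been collected, the corollary yields $\PP(U_1<\mu_1)\le p_t$, where $p_t$ is the significance level scheduled at round $t$. The second is controlled by Theorem~\ref{thm:ucb}, which bounds $U_i-\mu_i$ by the radius $4((12M_i)^{1/a_i}/\tilde n_i)^{a_i/(1+a_i)}$ with $\tilde n_i=\lfloor T_i(t-1)/k_t\rfloor$, up to a failure probability of order $(m_t-r_t)(k_t e^{-\tilde n_i/8}+e^{-k_t/8})$.

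The core of the argument is to identify, for each arm, a pull-count threshold beyond which a further pull is unlikely. I would adopt anytime schedules $k_t\asymp 17\log t$ and a level $p_t=r_t/m_t$ decaying polynomially, so that the prefactor $m_t-r_t$ is only of polynomial order in $t$. The key is a two-fold requirement on $\tilde n_i$: the radius drops below $\Delta_i$ as soon as $\tilde n_i\gtrsim (M_i/\Delta_i^{1+a_i})^{1/a_i}$, while summability of the failure probability against the $1/p_t$ prefactor forces $\tilde n_i\gtrsim\log t$ and $k_t\gtrsim\log t$. Since $\tilde n_i=\lfloor T_i(t-1)/k_t\rfloor$ and $k_t\asymp\log t$, combining these shows that once $T_i(t-1)\ge\max\{c_i(M_i/\Delta_i^{1+a_i})^{1/a_i},\,17^2\}\log^2 t$ both conditions hold and the conditional probability of a pull is summable in $t$. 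The factor $\log^2$ is exactly the product of the $\log t$ block count and the $\log t$ lower bound on the block size, and the baseline $17^2$ guards the regime of tiny gaps. Bounding the pulls below the threshold by the threshold itself, and those above it by a convergent series $\sum_t\PP(I_t=i)\le C$, gives $\EE[T_i(n)]\le\max\{c_i(M_i/\Delta_i^{1+a_i})^{1/a_i},17^2\}\log^2 n+C$; multiplying by $\Delta_i$ and summing yields \eqref{eq:regret-bound}. Matching the precise constant $c_i=4^{(1+a_i)/a_i}12^{1/(1+a_i)}$ is routine bookkeeping of the radius inversion against the extra logarithmic slack.

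I expect the main obstacle to be the interaction between the anytime, parameter-free requirement and the exactness machinery. Because $k_t$ and $p_t$ must be fixed in advance as functions of $t$ alone, with no access to $a_i$, $M_i$, or $\Delta_i$, a single schedule has to simultaneously keep the miscoverage $\sum_t p_t$ manageable, render the RMM failure probability of Theorem~\ref{thm:ucb} summable despite its $m_t-r_t\asymp 1/p_t$ prefactor, and still drive the radius below each unknown gap after only $O(\log^2 n)$ pulls. Verifying that the choice $k_t\asymp\log t$ stays admissible, that is $k_t\le T_i(t-1)$, which holds by the algorithm's construction, and that the adaptive, data-dependent collection of rewards does not break the i.i.d.\ hypothesis behind Theorem~\ref{thm:exact-coverage} and Theorem~\ref{thm:ucb}, is the delicate step; I would handle the latter by conditioning on $T_i(t-1)$ and invoking the distribution-free, sample-size-free exactness of the RMM test on the resulting fixed-size i.i.d.\ sample.

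Finally, to obtain the distribution-free bound \eqref{eq:regret-bound2} in the common-parameter case $a_i=a$, $M_i=M$, I would optimize the gap-dependent bound over a threshold $\Delta>0$. Splitting the arms, the contribution of those with $\Delta_i\le\Delta$ is at most $n\Delta$, since their total number of pulls is bounded by $n$, while the contribution of those with $\Delta_i>\Delta$ is at most $K\,cM^{1/a}\Delta^{-1/a}\log^2 n$ by the per-arm bound. Balancing the two terms at $\Delta\asymp(KM^{1/a}\log^2 n/n)^{a/(1+a)}$ gives regret of order $n^{1/(1+a)}(K\log^2 n)^{a/(1+a)}M^{1/(1+a)}$; absorbing the lower-order constant terms and the near-threshold arms that meet the $17^2\log^2 n$ baseline, for $n$ large enough, accounts for the slightly larger $\log^3 n$ factor appearing in \eqref{eq:regret-bound2}.
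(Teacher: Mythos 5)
Your proof of the gap-dependent bound \eqref{eq:regret-bound} follows essentially the same route as the paper's: the regret decomposition $R_n=\sum_i \Delta_i\,\EE[T_i(n)]$ (Lemma \ref{lemma:reg-decomp}), the same three-way decomposition of $\{I_t=i\}$ (the optimal arm's UCB undershoots $\mu_1$; the suboptimal arm's UCB exceeds $\mu_i+\Delta_i$ despite $T_i(t-1)\ge u_i$; or $T_i(t-1)<u_i$), exact coverage (Theorem \ref{thm:exact-coverage}) giving probability $p_t$ for the first event, Theorem \ref{thm:ucb} applied after conditioning on $T_i(t-1)=s$ for the second, and the same threshold $u_i=\max\{c_i(M_i/\Delta_i^{1+a_i})^{1/a_i},17^2\}\log^2 n$ with the identical accounting of why the block count $k_t\asymp 17\log t$ and the block-size requirement $\tilde n_i\gtrsim 17\log t$ jointly force the $\log^2$ factor; this is precisely the paper's Lemma \ref{lemma:expectation-bound}. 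Where you genuinely diverge is the gap-independent bound \eqref{eq:regret-bound2}: the paper follows the proof of Proposition 1 of \citet{bubeck2013bandits} and uses H\"older's inequality, writing $\EE[T_i(n)]=(\EE[T_i(n)])^{1/(1+a)}(\EE[T_i(n)])^{a/(1+a)}$, exploiting that $\Delta_i\cdot\big(\Delta_i^{-(1+a)/a}\big)^{a/(1+a)}=1$ so the gaps cancel exactly, and then bounding $\sum_i(\EE[T_i(n)])^{1/(1+a)}\le\big(\sum_i\EE[T_i(n)]\big)^{1/(1+a)}K^{a/(1+a)}\le n^{1/(1+a)}K^{a/(1+a)}$, which produces the stated constant with no slack. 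Your alternative --- splitting the arms at a gap threshold $\Delta$, bounding the small-gap arms by $n\Delta$, the large-gap arms by $K\,cM^{1/a}\Delta^{-1/a}\log^2 n$, and balancing --- is also correct, but the balancing step inherently costs an extra constant factor (roughly $2$) relative to the H\"older argument; this is harmless only because \eqref{eq:regret-bound2} carries $\log^3 n$ while your balance is performed at $\log^2 n$, so the surplus $(\log n)^{a/(1+a)}$ absorbs that factor together with the lower-order $17^2\log^2 n$ and $+C$ contributions for $n$ large enough --- a point you correctly flag. So both routes prove the statement as written; the H\"older route is what you would need if you wanted the constant in \eqref{eq:regret-bound2} without leaning on the logarithmic slack. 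One further remark: the step where you condition on $T_i(t-1)$ and treat the collected rewards as an i.i.d.\ sample deserves care, since adaptive stopping can in principle bias the conditional law; the paper glosses over this in exactly the same way, so your proposal is at the same level of rigor on this point.
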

    
    \begin{algorithm}[t]
    \caption{RMM-UCB policy}
    \textbf{Inputs:} number of arms $K$\vspace{1mm}
    \begin{algorithmic}[1]\label{alg:RSPSUCB}
         \hrule
         \STATE Pull each arm once.
         \STATE At round $t$ let $p_t = \nicefrac{1}{\ceil{1 + t \log^2(t)}}$ and $k_{t}^{(i)} \doteq \lfloor 17 \log(t)\wedge\sqrt{T_i(t-1)}\rfloor$.
         \STATE At round $t \in \NN$ for every $i \in [K]$ compute $U_i(T_i(t-1), p_t, k_t^{(i)})$ as in \eqref{eq:ucb}
         \STATE Choose arm $I_t \doteq \argmax_{i \in [K]} U_i(T_i(t-1), p_t, k_t^{(i)})$.
    \end{algorithmic}
    \end{algorithm}

    \section{Numerical Experiments}\label{sec:experiments}
       In this section we numerically compare the performance of the proposed RMM-UCB algorithm with baseline concentration inequality based approaches, such as Vanilla UCB \citep{lattimore2020bandit}, Median-of-Means UCB \citep{bubeck2013bandits}, and Truncated Mean UCB \citep{bubeck2013bandits}. Furthermore, RMM-UCB is also compared with state-of-the-art data-driven approaches, such as Perturbed-History Exploration (PHE) \citep{kveton2020a}, Minimax Optimal Robust Adaptively Perturbed Exploration (MR-APE) \citep{lee2022minimax} and Maximum Average Randomly Sampled (MARS) \citep{khorasani2023maximum}.
       
       We consider a stochactic bandit setting with $K = 2$ arms, where $\mu^* = \mu_1 = 1$, $\mu_2 = \mu_1-\Delta$ and $\Delta \in (0,1]$ determines the suboptimality gap. As we study the MAB problem in case of heavy-tailed distributions, and we assume symmetric reward distributions, the rewards of both arms are sampled from a symmetrized Pareto distribution, more specifically $\forall i \in [K]: \nu_i \sim S(X-1)$, where $S$ is sampled from the Rademacher distribution and $X$ from a Pareto distribution with scale parameter $1$ and shape parameter $\alpha_p = 1.05 + \varepsilon_p$. 
       
       In all algorithms we set the round dependent confidence parameter as $\delta_t = 1 + t \log^2(t)$ ($p_t = 1/\delta_t$) and in case of the median-of-means estimators, the rewards of arm $i$ are divided into $k_{t} = \lfloor 17 \log(t)\wedge\sqrt{T_i(t-1)}\rfloor$ partitions as in Algorithm \ref{alg:RSPSUCB}. For the moment upper bound parameter $M$ of the Truncated Mean and Median-of-Means UCB algorithms, we used the best possible one, i.e., $M = \BE(|\nu_i|^{1+\varepsilon})$. The PHE hyperparameter was set to $a=5.1$, while we chose uniform perturbations and hyperparameters $c = 0.5$, $\epsilon = 0$ for the MR-APE.

       The average cumulative regret of each algorithm from $100$ independently generated trajectories in case of $\varepsilon_p = 0.1$ and suboptimality gaps $\Delta = 0.1$, $\Delta = 0.5$ are shown in Figure \ref{fig:avg_regret_main}. 
       It can be observed that for a small suboptimality gap $\Delta = 0.1$, the RMM-UCB outperforms all the other algorithms, which have similar average regrets. In case $\Delta = 0.5$, the RMM-UCB, MARS and MR-APE algorithms perform significantly better than the other solutions, however RMM-UCB still demonstrates the best performance.
       This results indicates that the proposed data-driven RMM-UCB can achieve lower cumulative regrets than existing solutions in case of difficult bandit problems, i.e. small suboptimality gaps and heavy-tailed reward distributions, while also having the advantage that it does not require any knowledge of moment or distribution parameters. Simulation results for different Pareto parameters can be found in the supplement. As we investigate bandit problems with heavy-tailed distributions, showing the standard deviations of the cumulative regrets in the same figure would reduce comprehensibility, therefore they are illustrated separately in the supplements.  

        \begin{figure*}[!b]
        \centering
        \subfloat[$\Delta = 0.1$]{\includegraphics[width=0.48\columnwidth]{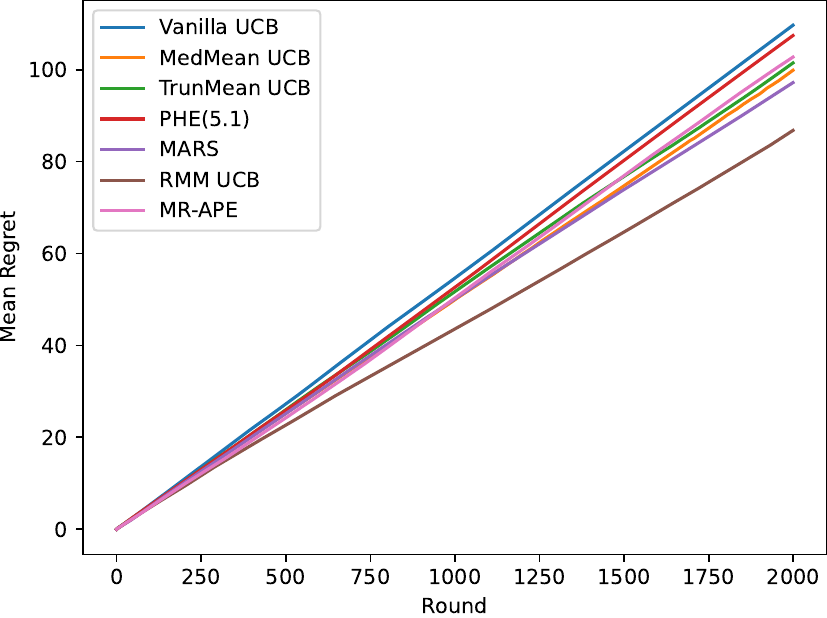}%
        \label{fig:reg_mean_0.1gap}}
        \hfil
        \hspace{3mm}
        \subfloat[$\Delta = 0.5$]{\includegraphics[width=0.48\columnwidth]{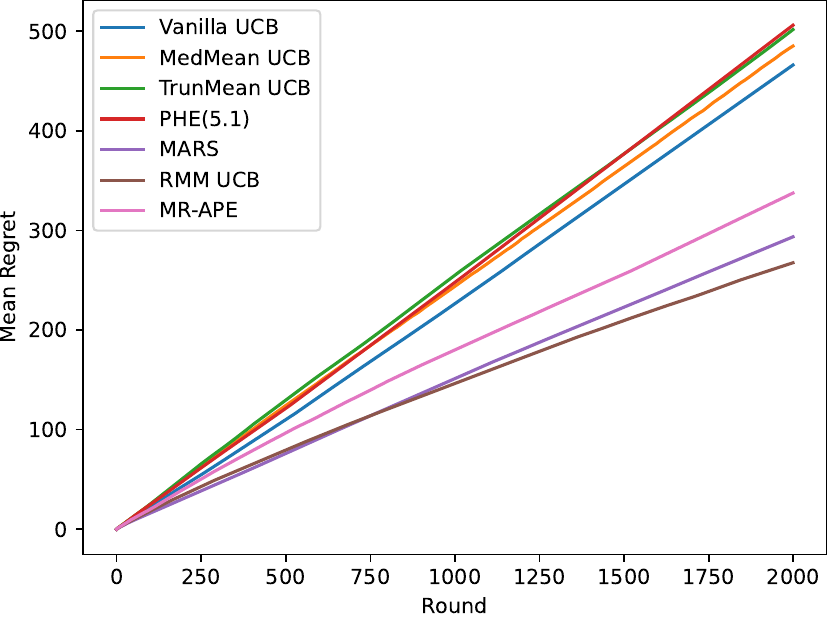}
        \label{fig:reg_mean_0.5gap}}
        \medskip
        \caption{Comparison of (average) cumulative regrets for Pareto bandits with $\varepsilon_p = 0.1$.}
        \label{fig:avg_regret_main}
        \end{figure*}
       
        \section{Conclusion}
        Stochastic multi-armed bandits (MABs) are fundamental and extensively studied online learning problems. UCB style algorithms serve as the standard ways to solve them. They can achieve near-optimal regret for many variants of MAB problems. The core and most important part of UCB methods is the construction of upper confidence bounds. The standard way to get such confidence bounds is to rely on concentration inequalities based on some moment assumptions. However, in many cases we do not have any a priori information on the reward distributions, we may not know any of their moment parameters either. This motivates completely data-driven methods which are free from such hyper-parameters that need to be tuned. In this paper, we built on the recently developed resampled median-of-means (RMM) estimator \citep{tamas2024data} and proposed an anytime, parameter-free UCB algorithm which is efficient even for the case of heavy-tailed rewards. We introduced a one-sided version of the RMM method to build UCBs with exact confidence levels, assuming only that the rewards are distributed symmetrically about their means. The data-driven MARS algorithm of \citet{khorasani2023maximum} can be seen as a special case of our construction, i.e., it corresponds to the case of taking the (resampled) median of only one mean ($k=1$). We proved a regret bound for the proposed RMM-UCB algorithm which is optimal up to a logarithmic factor, even under mild, heavy-tailed moment conditions. Unlike most previous constructions, RMM-UCB achieves this near-optimal regret without any a priori information on the moments, particularly, without the knowledge of parameters $a$ and $M$, cf.\ 
        Assumption \ref{ass:finite_mom}. RMM-UCB was also compared numerically with several baseline, concentration inequality based UCB methods, as well as with recent, state-of-the-art data-driven UCB algorithms on heavy-tailed bandit problems. The presented experiments are indicative of the phenomenon that RMM-UCB can outperform most previous algorithms on difficult MAB problems, i.e., when the suboptimality gap is small and the reward distributions are heavy-tailed.
                
    \bibliography{RMM_UCB_Arxiv}

\begin{thebibliography}{22}
\providecommand{\natexlab}[1]{#1}
\providecommand{\url}[1]{\texttt{#1}}
\expandafter\ifx\csname urlstyle\endcsname\relax
  \providecommand{\doi}[1]{doi: #1}\else
  \providecommand{\doi}{doi: \begingroup \urlstyle{rm}\Url}\fi

\bibitem[Auer et~al.(2002)Auer, Cesa-Bianchi, and Fischer]{auer2002finite}
Peter Auer, Nicolo Cesa-Bianchi, and Paul Fischer.
\newblock Finite-time {A}nalysis of the {M}ultiarmed {B}andit {P}roblem.
\newblock \emph{Machine Learning}, 47:\penalty0 235--256, 2002.

\bibitem[Bouneffouf et~al.(2020)Bouneffouf, Rish, and Aggarwal]{Bouneffouf2020}
Djallel Bouneffouf, Irina Rish, and Charu Aggarwal.
\newblock Survey on {A}pplications of {M}ulti-{A}rmed and {C}ontextual
  {B}andits.
\newblock In \emph{2020 IEEE Congress on Evolutionary Computation (CEC)}, pages
  1--8, 2020.

\bibitem[Bubeck et~al.(2012)Bubeck, Cesa-Bianchi, et~al.]{bubeck2012regret}
S{\'e}bastien Bubeck, Nicolo Cesa-Bianchi, et~al.
\newblock Regret {A}nalysis of {S}tochastic and {N}onstochastic {M}ulti-armed
  {B}andit {P}roblems.
\newblock \emph{Foundations and Trends{\textregistered} in Machine Learning},
  5, 2012.

\bibitem[Bubeck et~al.(2013)Bubeck, Cesa-Bianchi, and
  Lugosi]{bubeck2013bandits}
S{\'e}bastien Bubeck, Nicolo Cesa-Bianchi, and G{\'a}bor Lugosi.
\newblock Bandits {W}ith {H}eavy {T}ail.
\newblock \emph{IEEE Transactions on Information Theory}, 59\penalty0
  (11):\penalty0 7711--7717, 2013.

\bibitem[Cesa-Bianchi et~al.(2017)Cesa-Bianchi, Gentile, Lugosi, and
  Neu]{cesa2017boltzmann}
Nicol{\`o} Cesa-Bianchi, Claudio Gentile, G{\'a}bor Lugosi, and Gergely Neu.
\newblock Boltzmann {E}xploration {D}one {R}ight.
\newblock In \emph{Advances in Neural Information Processing Systems (NeurIPS)
  30}. Curran Associates, Inc., 2017.

\bibitem[Cs{\'a}ji et~al.(2015)Cs{\'a}ji, Campi, and Weyer]{csaji2014sign}
Bal{\'a}zs~{{\relax Cs}an\'ad} Cs{\'a}ji, Marco~Claudio Campi, and Erik Weyer.
\newblock Sign-{P}erturbed {S}ums: A {N}ew {S}ystem {I}dentification {A}pproach
  for {C}onstructing {E}xact {N}on--{A}symptotic {C}onfidence {R}egions in
  {L}inear {R}egression models.
\newblock \emph{IEEE Transactions on Signal Processing}, 63\penalty0
  (1):\penalty0 169--181, 2015.

\bibitem[Csáji and Tamás(2019)]{csaji2019}
Balázs~Csanád Csáji and Ambrus Tamás.
\newblock Semi-{P}arametric {U}ncertainty {B}ounds for {B}inary
  {C}lassification.
\newblock In \emph{58th IEEE Conference on Decision and Control (CDC), Nice,
  France}, pages 4427--4432, 2019.

\bibitem[Devroye et~al.(2016)Devroye, Lerasle, Lugosi, and
  Oliveira]{devroye2016sub}
Luc Devroye, Matthieu Lerasle, G{\'a}bor Lugosi, and Roberto~I. Oliveira.
\newblock Sub-{G}aussian {M}ean {E}stimators.
\newblock \emph{Annals of Statistics}, 44\penalty0 (6):\penalty0 2695--2725,
  2016.

\bibitem[Kagrecha et~al.(2019)Kagrecha, Nair, and
  Jagannathan]{kagrecha2019distribution}
Anmol Kagrecha, Jayakrishnan Nair, and Krishna~P Jagannathan.
\newblock Distribution {O}blivious, {R}isk-{A}ware {A}lgorithms for
  {M}ulti-{A}rmed {B}andits with {U}nbounded {R}ewards.
\newblock In \emph{Advances in Neural Information Processing Systems (NeurIPS)
  32}, pages 11269--11278, 2019.

\bibitem[Khorasani and Weyer(2023)]{khorasani2023maximum}
Masoud~Moravej Khorasani and Erik Weyer.
\newblock Maximum {A}verage {R}andomly {S}ampled: {A} {S}cale {F}ree and
  {N}on-parametric {A}lgorithm for {S}tochastic {B}andits.
\newblock In \emph{Advances in Neural Information Processing Systems (NeurIPS)
  37}, 2023.

\bibitem[Kveton et~al.(2020)Kveton, Szepesv{\'{a}}ri, Ghavamzadeh, and
  Boutilier]{kveton2020a}
Branislav Kveton, Csaba Szepesv{\'{a}}ri, Mohammad Ghavamzadeh, and Craig
  Boutilier.
\newblock Perturbed-{H}istory {E}xploration in {S}tochastic {L}inear {B}andits.
\newblock In \emph{Proceedings of The 35th Uncertainty in Artificial
  Intelligence Conference}, volume 115, pages 530--540, 2020.

\bibitem[Lattimore and Szepesv{\'a}ri(2020)]{lattimore2020bandit}
Tor Lattimore and Csaba Szepesv{\'a}ri.
\newblock \emph{Bandit Algorithms}.
\newblock Cambridge University Press, 2020.

\bibitem[Lee and Lim(2024)]{lee2022minimax}
Kyungjae Lee and Sungbin Lim.
\newblock Minimax {O}ptimal {B}andits for {H}eavy {T}ail {R}ewards.
\newblock \emph{IEEE Transactions on Neural Networks and Learning Systems},
  35\penalty0 (4):\penalty0 5280--5294, 2024.

\bibitem[Lee et~al.(2020)Lee, Yang, Lim, and Oh]{lee2020optimal}
Kyungjae Lee, Hongjun Yang, Sungbin Lim, and Songhwai Oh.
\newblock Optimal {A}lgorithms for {S}tochastic {M}ulti-{A}rmed {B}andits with
  {H}eavy {T}ailed {R}ewards.
\newblock \emph{Advances in Neural Information Processing Systems (NeurIPS)
  33}, pages 8452--8462, 2020.

\bibitem[Lugosi and Mendelson(2019)]{lugosi2019mean}
G{\'a}bor Lugosi and Shahar Mendelson.
\newblock Mean {E}stimation and {R}egression {U}nder {H}eavy-{T}ailed
  {D}istributions: {A} {S}urvey.
\newblock \emph{Foundations of Computational Mathematics}, 19:\penalty0
  1145--1190, 2019.

\bibitem[Marsigli(2023)]{marsigli2022towards}
Lupo Marsigli.
\newblock Towards {F}ully-{A}daptive {R}egret {M}inimization in
  {H}eavy-{T}ailed {B}andits, 2023.

\bibitem[Nemirovsky and Yudin(1983)]{nemirovskij1983problem}
Arkadij~Semenovi{\v{c}} Nemirovsky and David~Borisovich Yudin.
\newblock Problem {C}omplexity and {M}ethod {E}fficiency in {O}ptimization.
\newblock 1983.

\bibitem[Plackett(1949)]{plackett1949historical}
Robin~L Plackett.
\newblock A {H}istorical {N}ote on the {M}ethod of {L}east {S}quares.
\newblock \emph{Biometrika}, 36\penalty0 (3/4):\penalty0 458--460, 1949.

\bibitem[Robbins(1952)]{robbins1952some}
Herbert Robbins.
\newblock Some {A}spects of the {S}equential {D}esign of {E}xperiments.
\newblock \emph{Bulletin of the American Mathematics Society}, 58\penalty0
  (527-535), 1952.

\bibitem[Szentp{\'e}teri and Cs{\'a}ji(2023)]{szentpeteri2023sample}
Szabolcs Szentp{\'e}teri and Bal{\'a}zs~Csan{\'a}d Cs{\'a}ji.
\newblock Sample {C}omplexity of the {S}ign-{P}erturbed {S}ums {I}dentification
  {M}ethod: {S}calar {C}ase.
\newblock In \emph{Proceedings of the IFAC World Congress}, page 10363–10370,
  2023.

\bibitem[Tam{\'a}s et~al.(2024)Tam{\'a}s, Szentp{\'e}teri, and
  Cs{\'a}ji]{tamas2024data}
Ambrus Tam{\'a}s, Szabolcs Szentp{\'e}teri, and Bal{\'a}zs~{{\relax Cs}an\'ad}
  Cs{\'a}ji.
\newblock Data-driven confidence intervals with optimal rates for the mean of
  heavy-tailed distributions.
\newblock In \emph{27th International Conference on Artificial Intelligence and
  Statistics (AISTATS)}, pages 3439--3447, 2024.

\bibitem[Wei and Srivastava(2021)]{wei2021}
Lai Wei and Vaibhav Srivastava.
\newblock Minimax {P}olicy for {H}eavy-{T}ailed {B}andits.
\newblock \emph{IEEE Control Systems Letters}, 5\penalty0 (4):\penalty0
  1423--1428, 2021.

\end{thebibliography}

    \newpage
    \begin{appendices}
    {
        \centering
        \section*{\LARGE \sc Supplementary Material}
        \vspace{3mm}
    }
                
    \section{Proof of Theorem \ref{thm:exact-coverage}}
 
    We prove that the one-sided RMM hypothesis test admits an exact confidence level. This result is based on a resampling rank test presented in \citep{csaji2014sign}. The key object of the hypothesis test is the so-called ranking function.
    \begin{definition}[ranking function]\label{def:ranking-function}
Let 
$\BA$ be a measurable space,
a (measurable) function 
$\psi : \BA^m \to [\,m\,]$ is called a 
{ranking function} if for all $(a_1, \dots, a_m) \in \BA^m$, it satisfies\vspace{-1mm}
\begin{enumerate}
\item[P1] For all permutation $\mu$ on $\{2,\dots, m\}$, 
$\psi\big(\,a_1, a_{2}, \dots, a_{m}\,\big)\; = \;
\psi\big(\,a_1, a_{\mu(2)}, \dots, a_{\mu(m)}\,\big),$
that is $\psi$ is invariant w.r.t. reordering the last $m-1$ terms of its arguments.
\item[P2] For all $i,j \in  [\,m\,]$,
if $a_i \neq a_j$, then we have
$\psi\big(\,a_i, \{a_{k}\}_{k\neq i}\,\big)\, \neq \;\psi\big(\,a_j, \{a_{k}\}_{k\neq j}\,\big)$,
where the simplified notation is justified by P1.
\end{enumerate}
\end{definition}

The value of a ranking function is called the {\em rank}. An important observation about the ranks of exchangeable random elements is \cite[Lemma 1]{csaji2019}:
\smallskip
\begin{lemma}\label{lemma:uniform}
{Let $\xi_1, \dots, \xi_m$ be a.s.\ pairwise different exchangeable random elements and let $\psi$ be a ranking function. Then $\psi(\xi_1, \dots, \xi_m)$ has a discrete uniform distribution on $[m]$}.
\end{lemma}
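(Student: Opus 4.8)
The plan is to exploit the interplay between the two defining properties of a ranking function and the symmetry coming from exchangeability. For each index $i \in [m]$, property P1 guarantees that the quantity $R_i \doteq \psi(\xi_i, \{\xi_k\}_{k \neq i})$ is well defined, since $\psi$ does not depend on the order of its last $m-1$ arguments; note that $R_1 = \psi(\xi_1, \dots, \xi_m)$ is exactly the random rank whose distribution we must identify. The strategy is to show that the vector $(R_1, \dots, R_m)$ is almost surely a permutation of $[m]$ and that each coordinate is identically distributed; the two facts together force the common distribution to be uniform.

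First I would establish that $R_1, \dots, R_m$ are almost surely pairwise distinct. Because the $\xi_i$ are assumed to be a.s.\ pairwise different, property P2 applies with $a_k = \xi_k$ and yields $R_i \neq R_j$ whenever $i \neq j$, on the almost sure event that all the $\xi_k$ differ. Since each $R_i$ takes values in $[m]$ and there are $m$ of them, they must realize every value in $[m]$ exactly once; equivalently, $\sum_{i=1}^m \BI(R_i = \ell) = 1$ almost surely for each fixed $\ell \in [m]$.

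Next I would show that all the $R_i$ share the distribution of $R_1$. Fix $i$ and apply exchangeability to the transposition $\tau = (1\,i)$: the tuple $(\xi_{\tau(1)}, \dots, \xi_{\tau(m)})$ has the same law as $(\xi_1, \dots, \xi_m)$, so $\psi(\xi_{\tau(1)}, \dots, \xi_{\tau(m)})$ has the same law as $R_1$. The first entry of the permuted tuple is $\xi_i$ and the remaining entries are $\{\xi_k\}_{k \neq i}$ in some order, so P1 lets me rewrite $\psi(\xi_{\tau(1)}, \dots, \xi_{\tau(m)}) = R_i$. Hence $R_i \overset{d}{=} R_1$, so $\PP(R_i = \ell) = \PP(R_1 = \ell)$ for every $\ell$.

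Finally, taking expectations in the identity $\sum_{i=1}^m \BI(R_i = \ell) = 1$ and using the equal-distribution fact gives $m\,\PP(R_1 = \ell) = \sum_{i=1}^m \PP(R_i = \ell) = 1$, so $\PP(R_1 = \ell) = 1/m$ for all $\ell \in [m]$, which is the claim. The step I expect to be the most delicate is combining exchangeability with P1 correctly in the identical-distribution argument: exchangeability only gives equality in law for the full permuted tuple, and it is precisely P1's invariance under reordering the last $m-1$ coordinates that lets me collapse that permuted tuple back to the canonical form $(\xi_i, \{\xi_k\}_{k\neq i})$ defining $R_i$. One must also take care that the almost sure permutation property in the first step holds on a single event of full probability for all $\ell$ simultaneously, which is immediate once the $\xi_k$ are a.s.\ distinct.
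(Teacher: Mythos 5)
Your proof is correct. Note that the paper itself does not prove this lemma at all: it is imported verbatim as \citep[Lemma 1]{csaji2019}, so there is no internal proof to compare against. Your argument --- defining $R_i \doteq \psi(\xi_i, \{\xi_k\}_{k\neq i})$ via P1, using P2 plus a.s.\ distinctness to conclude $(R_1,\dots,R_m)$ is a.s.\ a permutation of $[m]$, using exchangeability and P1 to show the $R_i$ are identically distributed, and then summing $\PP(R_i=\ell)$ over $i$ --- is precisely the standard argument behind such rank-uniformity results in the sign-perturbed-sums and resampling literature, and every step (including the transposition trick and the single full-probability event for all $\ell$ simultaneously) is sound.
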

\smallskip
The original sample $\CD_0$ and the alternative samples $\CD_j(\theta)$ are random vectors in $\RR^n$. Observe that these datasets can be identical. This poses a technical challenge in ranking. The tie-breaking permutation $\pi$ is introduced to resolve this issue.
Let $\CD_j^\pi(\theta) \doteq (\CD_j(\theta), \pi(j))$. It is easy to prove that $\CD_0^\pi(\mu), \dots, \CD_{m-1}^\pi(\mu)$ are exchangeable, hence we obtain an exact hypothesis test if we reject $H_0$ if and only if $\psi(\CD_0^\pi(\theta), \dots, \CD_{m-1}^\pi(\theta)) > m-r$. We refer to \citep[Theorem 3.1]{tamas2024data}:

\smallskip
\begin{theorem}\label{thm:exact-cov-general-rank}
    For any ranking function $\psi$ if A\ref{ass:iid} and A\ref{ass:symmetry} hold we have
    \begin{equation}\label{eq:exact}
        \PP\big(\, \psi\big(\,\CD_0^\pi(\mu), \dots, \CD_{m-1}^\pi(\mu)\,\big) > m- r\, \big) = 1 - \frac{r}{m}. 
    \end{equation}
\end{theorem}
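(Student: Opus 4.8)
\emph{Proof plan.} The plan is to reduce the statement to Lemma~\ref{lemma:uniform}: I will show that the augmented elements $\CD_0^\pi(\mu),\dots,\CD_{m-1}^\pi(\mu)$ are exchangeable and almost surely pairwise different, so that the rank $\psi(\CD_0^\pi(\mu),\dots,\CD_{m-1}^\pi(\mu))$ is uniformly distributed on $[m]$; the asserted probability then follows by counting the equally likely rank values that fall in the rejection range. The entire argument is carried out conditionally on the absolute deviations $\{|W_i|\}_{i=1}^n=\{|X_i-\mu|\}_{i=1}^n$ and then integrated out. Note that only A\ref{ass:iid} and A\ref{ass:symmetry} are used: no moment assumption is needed for the level.

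First I would fix a sign representation. Writing $W_i=X_i-\mu$ and $s_i=\sign(W_i)$, Assumption~\ref{ass:symmetry} ensures that, conditionally on $\{|W_i|\}$, the signs $s_1,\dots,s_n$ are i.i.d.\ Rademacher and independent of the resampling signs $\{\alpha_{i,j}\}$. Setting $\beta_i^{(0)}\doteq s_i$ and $\beta_i^{(j)}\doteq\alpha_{i,j}s_i$ for $j\in\{1,\dots,m-1\}$, every dataset acquires the common form $\CD_j(\mu)=\{\mu+\beta_i^{(j)}|W_i|\}_{i=1}^n$; in particular $\CD_0(\mu)=\CD_0$ (since $\mu+s_i|W_i|=X_i$) is placed on exactly the same footing as the resampled copies. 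This is the point where symmetry is indispensable: it is what lets the un-resampled original behave like a resampled sample.

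The crux, and the step I expect to be the main obstacle, is the exchangeability of $\CD_0(\mu),\dots,\CD_{m-1}(\mu)$, which I would extract from the conditional law of the sign array $(\beta_i^{(j)})$. For each fixed $i$ the map $(s_i,\alpha_{i,1},\dots,\alpha_{i,m-1})\mapsto(\beta_i^{(0)},\dots,\beta_i^{(m-1)})=(s_i,\alpha_{i,1}s_i,\dots,\alpha_{i,m-1}s_i)$ is a bijection of $\{-1,+1\}^m$, so, the source being uniform on $\{-1,+1\}^m$, the image is uniform as well. Hence, conditionally on $\{|W_i|\}$, the entries $\beta_i^{(j)}$ are i.i.d.\ Rademacher across both $i$ and $j$, which makes the columns $\beta^{(0)},\dots,\beta^{(m-1)}$ conditionally i.i.d.\ and therefore exchangeable. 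Since each $\CD_j(\mu)$ is one and the same fixed function of $\beta^{(j)}$ and $\{|W_i|\}$, the datasets inherit conditional exchangeability, and averaging over $\{|W_i|\}$ gives unconditional exchangeability.

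Finally I would attach the tie-breaking labels. Because $\pi$ is a uniform random permutation of $\{0,1,\dots,m-1\}$ drawn independently of the data and of $\{\alpha_{i,j}\}$, the label vector $(\pi(0),\dots,\pi(m-1))$ is an independent exchangeable sequence, and pairing it coordinatewise with the exchangeable sequence $(\CD_j(\mu))_j$ keeps the pairs $\CD_j^\pi(\mu)$ exchangeable. Moreover, as $\pi$ is a bijection the labels are all distinct, so the augmented elements are surely pairwise different even when two underlying datasets coincide; this is precisely the role of $\pi$. With both hypotheses of Lemma~\ref{lemma:uniform} verified, the rank $\psi(\CD_0^\pi(\mu),\dots,\CD_{m-1}^\pi(\mu))$ is uniform on $[m]$, and evaluating the probability of the event $\{\psi>m-r\}$ by a direct count over the $m$ equally likely outcomes yields the claimed level.
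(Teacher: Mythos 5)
Your proposal takes essentially the same route as the paper: the paper likewise reduces the claim to Lemma~\ref{lemma:uniform} via exchangeability of the augmented datasets $\CD_j^\pi(\mu)$, though it merely remarks that exchangeability ``is easy to prove'' and defers the rest to \citep[Theorem 3.1]{tamas2024data}. Your conditional sign representation $\CD_j(\mu)=\{\mu+\beta_i^{(j)}|W_i|\}_{i=1}^n$, together with the observation that $(s,a_1,\dots,a_{m-1})\mapsto(s,a_1s,\dots,a_{m-1}s)$ is a bijection of $\{-1,+1\}^m$, supplies exactly the detail the paper omits, and your treatment of the tie-breaking permutation (independence of $\pi$ preserves exchangeability of the pairs, and the distinct labels make them surely pairwise different) is correct. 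A minor technicality: if $\QX$ has an atom at $\mu$, then $\sign(W_i)$ is not Rademacher on the event $W_i=0$; one should define $s_i$ there as an independent Rademacher sign, which leaves all datasets unchanged.

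There is, however, one step you should not have glossed over. Your argument, completed honestly, gives
\begin{equation*}
\PP\big(\,\psi\big(\CD_0^\pi(\mu),\dots,\CD_{m-1}^\pi(\mu)\big) > m-r\,\big) \;=\; \frac{r}{m},
\end{equation*}
because a rank uniform on $[m]$ exceeds $m-r$ for exactly $r$ of the $m$ equally likely values. This does not equal the right-hand side $1-\frac{r}{m}$ displayed in the statement (unless $r/m=1/2$). The displayed equation is in fact a typo in the paper: it is inconsistent with Theorem~\ref{thm:exact-coverage}, which the paper presents as a direct consequence of this result and which asserts $\PP(R(\mu)>m-r)=\frac{r}{m}$ for the ranking function $R$. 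The consistent statement reads either $\PP(\psi>m-r)=\frac{r}{m}$ or, equivalently, $\PP(\psi\leq m-r)=1-\frac{r}{m}$. By asserting that the direct count ``yields the claimed level'' you silently identified $r/m$ with $1-r/m$; the correct conclusion of your (otherwise sound) proof is the $r/m$ version, and the mismatch with the printed statement should have been flagged rather than absorbed.
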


Theorem \ref{thm:exact-coverage} is a direct consequence of Theorem \ref{thm:exact-cov-general-rank}, because $R$ is a ranking function.

\section{Proof of Lemma \ref{lemma:finite-rep}}

    Because of Lemma \ref{lemma:finite-rep-m-2} we have
            \begin{equation}
                \{ \theta: S_0(\theta) \succ_\pi S_j(\theta)\} = (-\infty, U_j]_\pi
            \end{equation}
            for $j = 1, \dots, m-1$. Henceforth, it is easy to see that 
            \begin{equation}
                \bigg\{ \theta: 1 + \sum_{j=1}^{m-1} \BI( S_0(\theta) \prec_\pi S_j(\theta)) \leq m-r \bigg\} = (-\infty, U_{(m-r)}]_\pi,    
            \end{equation}
            therefore $U = U_{(m-r)}$. \qed
    \begin{lemma}\label{lemma:finite-rep-m-2}
        Assume A\ref{ass:iid} and A\ref{ass:symmetry}. For $m=2$ 
        \begin{equation}
            U = \med_{\ell \in [k]}\frac{\widehat{\mu}(\CD_0) - \frac{1}{|B_\ell|}\sum_{i\in B_\ell} \alpha_{i,1} X_i}{1 - \frac{1}{|B_\ell|}\sum_{i\in B_\ell} \alpha_{i,1}},
        \end{equation}
         where $\frac{\pm c}{0} = \pm \infty$ for all $c > 0$ and $0/0 = \sign( \pi(1) - \pi(0)) \cdot\infty$.
    \end{lemma}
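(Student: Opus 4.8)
The plan is to specialize the general definitions $\Theta_n = \{\theta : R(\theta) \le m-r\}$ and $U = \sup \Theta_n$ to the case $m = 2$. Since $1 \le r < m = 2$ forces $r = 1$, there is a single alternative sample ($j = 1$) and $m - r = 1$, so $R(\theta) = 1 + \BI\big(S_0(\theta) \prec_\pi S_1(\theta)\big)$ and hence $\theta \in \Theta_n$ precisely when $S_0(\theta) \succ_\pi S_1(\theta)$. First I would make the reference variables explicit. Writing $\bar\alpha_\ell \doteq \frac{1}{|B_\ell|}\sum_{i\in B_\ell}\alpha_{i,1}$ and $\overline{\alpha X}_\ell \doteq \frac{1}{|B_\ell|}\sum_{i\in B_\ell}\alpha_{i,1}X_i$, the $\ell$-th block mean of $\CD_1(\theta)$ is the affine function $g_\ell(\theta) = \overline{\alpha X}_\ell + \theta(1-\bar\alpha_\ell)$, so that $S_1(\theta) = \med_{\ell\in[k]} g_\ell(\theta) - \theta$ while $S_0(\theta) = \widehat{\mu}(\CD_0) - \theta$. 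The common $-\theta$ cancels, which gives the clean reduction $S_0(\theta) > S_1(\theta) \iff \med_{\ell} g_\ell(\theta) < \widehat{\mu}(\CD_0)$, together with $S_0(\theta) = S_1(\theta) \iff \med_{\ell} g_\ell(\theta) = \widehat{\mu}(\CD_0)$.

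The crucial structural observation is that every slope is non-negative: since $\alpha_{i,1}\in\{-1,+1\}$ we have $\bar\alpha_\ell\in[-1,1]$, hence $1-\bar\alpha_\ell\in[0,2]$. Therefore each $g_\ell$ is non-decreasing, so $\theta\mapsto\med_\ell g_\ell(\theta)$ is non-decreasing and $\{\theta : \med_\ell g_\ell(\theta) < \widehat{\mu}(\CD_0)\}$ is a left half-line; this alone establishes that $\Theta_n$ has the claimed shape and that $U$ is its right endpoint. To identify the endpoint I would prove the per-block equivalence $g_\ell(\theta) < \widehat{\mu}(\CD_0) \iff \theta < U_{\ell,1}$, where $U_{\ell,1}$ is the ratio in the statement. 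When $1-\bar\alpha_\ell > 0$ this follows by dividing through by the positive slope; when $1-\bar\alpha_\ell = 0$ the block mean is the constant $\overline{\alpha X}_\ell$, so the inequality holds for every $\theta$ or for no $\theta$ according to the sign of $\widehat{\mu}(\CD_0)-\overline{\alpha X}_\ell$, which is exactly what the conventions $\pm c/0 = \pm\infty$ encode. Consequently $\#\{\ell : g_\ell(\theta) < \widehat{\mu}(\CD_0)\} = \#\{\ell : U_{\ell,1} > \theta\}$ for all $\theta$.

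Next I would turn the median comparison into a counting statement: $\med_\ell g_\ell(\theta) < \widehat{\mu}(\CD_0)$ holds iff the number of block means strictly below $\widehat{\mu}(\CD_0)$ reaches the median rank used by the MoM estimator, and by the previous step this is the same as requiring that the appropriate number of the values $U_{\ell,1}$ strictly exceed $\theta$. This is precisely the condition that $\theta$ lie strictly below the corresponding order statistic of $U_{1,1},\dots,U_{k,1}$, namely $\med_{\ell\in[k]}U_{\ell,1}$. Taking the supremum then removes any ambiguity about whether the endpoint is attained and yields $U = \med_{\ell\in[k]}U_{\ell,1}$, matching the claimed formula.

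Finally, to obtain the sharp description of $\Theta_n$ (whether $U$ itself belongs to it) and to justify the remaining conventions, I would analyze the boundary $\theta = U$ on its own. There $S_0(U) = S_1(U)$ can occur, and $S_0 \succ_\pi S_1$ is then decided by tie-breaking: $S_1 \prec_\pi S_0 \iff \pi(1) < \pi(0)$, which is what produces the $(-\infty,U]_\pi$ versus $(-\infty,U)_\pi$ dichotomy and, in the fully degenerate case where a block contributes $0/0$, is recorded by $0/0 = \sign(\pi(1)-\pi(0))\cdot\infty$. The hard part will be exactly this bookkeeping: keeping the MoM median convention consistent between the block means and the order statistics of the $U_{\ell,1}$ (a counting identity links the two indices), while simultaneously handling coincidences among the $g_\ell(\theta)$ and the infinite or degenerate values of $U_{\ell,1}$ together with the permutation $\pi$, so that endpoint membership comes out correctly. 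The monotonicity coming from the non-negative slopes is what reduces all of this to a finite, case-based verification rather than anything analytically deep.
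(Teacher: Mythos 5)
Your proposal is correct and follows essentially the same route as the paper's own proof: after reducing the $m=2$ acceptance condition to $S_1(\theta) \prec_\pi S_0(\theta)$, both arguments exploit that the block means are affine in $\theta$ (your cancellation of the common $-\theta$, giving slopes in $[0,2]$ versus a constant, is just a reparametrization of the paper's picture of lines with slopes in $[-1,1]$ versus a line of slope $-1$), identify the per-block half-lines $(-\infty, U_{\ell,1})$ with the stated conventions for parallel and degenerate blocks, and conclude via the median--counting argument with $\pi$-tie-breaking deciding endpoint membership. The bookkeeping you flag as the hard part --- reconciling the lower-median convention of $\widehat{\mu}$ with the order statistic of the $U_{\ell,1}$, which for even $k$ differ by one rank --- is indeed the one delicate point, and the paper's proof glosses over it as well (its display even mixes $\lfloor k/2\rfloor$ and $\lfloor k/2\rfloor+1$).
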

    \begin{proof}
     The graph of reference function 
        \begin{equation}
            S_0(\theta)  = \widehat{\mu}(\CD_0) - \theta
        \end{equation}
        is a line with slope $-1$, whereas
        \begin{align}
            S_1(\theta) &= \widehat{\mu}(\CD_1(\theta)) - \theta \nonumber\\
            &= \med \bigg( \frac{1}{|B_1|}\sum_{i \in B_1} (\alpha_{i,1} ( X_i - \theta) + \theta),\nonumber\dots ,\frac{1}{|B_k|}\sum_{i\in B_k} (\alpha_{i,1} ( X_i - \theta) + \theta)\bigg) - \theta\nonumber\\
            &= \med \bigg(  \frac{1}{|B_1|}\sum_{i \in B_1} \alpha_{i,1} ( X_i - \theta), \dots ,\frac{1}{|B_k|}\sum_{i\in B_k} \alpha_{i,1} ( X_i - \theta) \bigg),
        \end{align}
        which is the median of linear functions with slopes between $-1$ and $+1$.
        Let
        \begin{equation}
            S_1^\ell(\theta) \doteq  \frac{1}{|B_\ell|}\sum_{i \in B_\ell} \alpha_{i,1} ( X_i - \theta) \text{ for }\ell = 1, \dots, k
        \end{equation}
        be sub-sample linear functions. It is easy to see that
        \begin{equation}
            \{ \theta: S_1^{\ell}(\theta) < S_0(\theta)\} = (-\infty, \nu_\ell),
        \end{equation}
        where if $S_0$ and $S_1^{\ell}$ are not parallel, then $\nu_\ell$ is the intersection of $S_0$ with $S_1^{\ell}$, in which case
        \begin{equation}\label{eq:form-U}
        \begin{aligned}
            \nu_\ell &= \frac{\widehat{\mu}(\CD_0) - \frac{1}{|B_\ell|}\sum_{i \in B_\ell} \alpha_{i,1} X_i}{1 - \frac{1}{|B_\ell|}\sum_{i \in B_\ell} \alpha_{i,1}}\\
            &= \mu + \frac{\widehat{\mu}(\mathcal{W}_0) -  \frac{1}{|B_\ell|}\sum_{i \in B_\ell} \alpha_{i,1} W_i}{\frac{1}{|B_\ell|}\sum_{i \in B_\ell} (1- \alpha_{i,1})}\\
            &= \mu + \frac{\widehat{\mu}(\mathcal{W}_0) - \frac{1}{|B_\ell|}\sum_{i \in B_\ell} \alpha_{i,1} W_i}{\frac{2}{|B_\ell|} Z_\ell}.
        \end{aligned}
        \end{equation}
        If $S_0$ and $S_1^{\ell}$ are parallel, then the extended ``intersection'' points are
        \begin{equation}
            \nu_\ell \doteq
	  \begin{cases}
		  +\infty  & \mbox{if } \forall \theta: S_1^{\ell}(\theta)\prec_\pi S_0(\theta)\\
		  -\infty & \mbox{if } \forall \theta:  S_0(\theta) \prec_\pi S_1^{\ell}(\theta) .
	  \end{cases}
        \end{equation}
        These values are equivalent to the formula of \eqref{eq:form-U} with $\pm c/ 0 = \pm \infty$ and slightly abuse of notation $0/0 = \sign( \pi(1) - \pi(0)) \cdot\infty$.
        By
        \begin{equation}
        \begin{aligned}
            \{\theta: S_0(\theta) \succ_\pi S_1(\theta)\}
            &= \bigcap_{I \subseteq [n], |I| =\lfloor k/2 \rfloor} \bigcap_{\ell \in I} \{ \theta: S_0(\theta) \succ_\pi S_1^{\ell}(\theta) \}\\
            &= \bigcap_{I \subseteq [n], |I| =\lfloor k/2 \rfloor+1} \bigcap_{\ell \in I} ( -\infty, \nu_\ell]_\pi = (-\infty, \nu]_\pi,
        \end{aligned}
        \end{equation}
        where $(-\infty, \nu]_\pi$ is closed from the right depending on $\pi$, i.e., $(-\infty, \nu]_\pi \doteq ( -\infty, \nu]$ if $\pi(0) > \pi(1)$ and closed otherwise.
        Henceforth,
        \begin{align}
            U&= \med_{1 \leq \ell \leq k} \nu_\ell \nonumber = \mu + \med_{1 \leq \ell \leq k} \bigg( \frac{\widehat{\mu}(\mathcal{W}_0)-  \frac{1}{|B_\ell|}\sum_{i\in B_\ell} \alpha_{i,1} W_i}{\frac{2}{\tilde{n}}Z_\ell}\bigg)\nonumber = \mu + V,
        \end{align}
        where $V$ is defined by
        \begin{equation}
            V \doteq \med_{1 \leq \ell \leq k} \bigg( \frac{\widehat{\mu}(\mathcal{W}_0)-  \frac{1}{|B_\ell|}\sum_{i \in B_\ell} \alpha_{i,1} W_i}{\frac{2}{|B_\ell|} Z_\ell}\bigg)
        \end{equation}
         is a valid $50\%$ UCB.
        \end{proof}

        \section{Proof of Theorem \ref{thm:ucb}}

        The proof is based on Theorem \ref{thm:diameter-a} and the union bound. It is essentially the same as the proof of \citep[Theorem 3]{szentpeteri2023sample}. \qed
        
        \begin{theorem}\label{thm:diameter-a}
        Assume A\ref{ass:iid}-A\ref{ass:finite_mom}. For $m=2$ and
        \begin{equation}
        \begin{aligned}
        U \doteq \sup\,\{\, \theta : R(\theta)  = 1 \},
        \end{aligned}
        \end{equation}
        for every $k \leq n$, $n\in \NN$ and $\tilde{n} = \lfloor n/k\rfloor$, we have
        \begin{equation}
        \begin{aligned}
        &\PP \bigg( \,U - \mu> 4(12M)^{1/(1+a)}\bigg(\frac{1}{\tilde{n}}\bigg)^{\frac{a}{1+a}} \,\bigg) \leq 2k\exp(-\nicefrac{\tilde{n}}{8}) + 2\exp(-\nicefrac{k}{8})
        \end{aligned}
        \end{equation}
    \end{theorem}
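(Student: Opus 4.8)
The plan is to start from the finite representation in Lemma~\ref{lemma:finite-rep-m-2}, which for $m=2$ expresses the upper confidence bound as $U - \mu = V = \med_{\ell\in[k]} V_\ell$ with
\[
V_\ell = \frac{\widehat{\mu}(\mathcal{W}_0) - \frac{1}{|B_\ell|}\sum_{i\in B_\ell}\alpha_{i,1}W_i}{\frac{2}{|B_\ell|}Z_\ell},
\]
where $W_i = X_i - \mu$, $\widehat{\mu}(\mathcal{W}_0) = \med_\ell \bar W_\ell$ with $\bar W_\ell = \frac{1}{|B_\ell|}\sum_{i\in B_\ell}W_i$, and $Z_\ell = \#\{i\in B_\ell : \alpha_{i,1}=-1\}$, so that $\frac{2Z_\ell}{|B_\ell|} = 1 - \frac{1}{|B_\ell|}\sum_{i\in B_\ell}\alpha_{i,1}$. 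The structural observation I would exploit is that, by A\ref{ass:symmetry}, $W$ is symmetric about $0$, hence $\alpha_{i,1}W_i \stackrel{d}{=} W_i$ and the variables $\{\alpha_{i,1}W_i\}_i$ are again i.i.d., centered, with $(1+a)$-th moment exactly $M$. Thus both the signed means $\bar W_\ell^\alpha = \frac{1}{|B_\ell|}\sum_{i\in B_\ell}\alpha_{i,1}W_i$ in the numerator and the block means $\bar W_\ell$ inside $\widehat{\mu}(\mathcal{W}_0)$ are ordinary block means of i.i.d.\ centered heavy-tailed data, so the same moment estimate governs both.

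Fix $c = 4(12M)^{1/(1+a)}\tilde n^{-a/(1+a)}$, so $(c/4)^{1+a} = 12M\tilde n^{-a}$, and control three error sources. First, the denominators: since $1-\frac{1}{|B_\ell|}\sum_{i\in B_\ell}\alpha_{i,1}$ averages bounded Rademacher signs, a two-sided Hoeffding bound gives $\PP(|\frac{1}{|B_\ell|}\sum_{i\in B_\ell}\alpha_{i,1}| \ge \tfrac12) \le 2\exp(-|B_\ell|/8) \le 2\exp(-\tilde n/8)$, and a union bound over the $k$ blocks produces the term $2k\exp(-\tilde n/8)$ and places us on the event $G_D$ where every denominator lies in $(\tfrac12,\tfrac32)$. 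Second, the shared estimator $\widehat{\mu}(\mathcal{W}_0)$: each $\bar W_\ell$ has mean $0$, and by the von Bahr--Esseen inequality $\EE|\bar W_\ell|^{1+a} \le 2M/\tilde n^a$, so Markov yields $\PP(\bar W_\ell > c/4) \le \frac{2M}{\tilde n^a (c/4)^{1+a}} = \tfrac16 \le \tfrac14$; as the $\bar W_\ell$ are independent across blocks, the median can exceed $c/4$ only if at least $k/2$ of them do, and Hoeffding on this binomial count gives $\PP(\widehat{\mu}(\mathcal{W}_0) > c/4) \le \exp(-k/8)$. Call this good event $G_1$.

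The final step is the median argument for $V$ itself. On $G_D\cap G_1$, whenever $V_\ell > c$ the bound $\frac{2Z_\ell}{|B_\ell|} > \tfrac12$ forces the numerator $\widehat{\mu}(\mathcal{W}_0) - \bar W_\ell^\alpha > c/2$, and then $\widehat{\mu}(\mathcal{W}_0) \le c/4$ forces $\bar W_\ell^\alpha < -c/4$. Hence $\{\med_\ell V_\ell > c\}\cap G_D\cap G_1$ implies that at least $k/2$ blocks satisfy $\bar W_\ell^\alpha < -c/4$. Exactly as before, $\PP(\bar W_\ell^\alpha < -c/4) \le \frac{2M}{\tilde n^a (c/4)^{1+a}} \le \tfrac14$, these events are independent across blocks, and Hoeffding bounds this tail by $\exp(-k/8)$. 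Combining via the union bound,
\[
\PP(U-\mu > c) \le \PP(G_D^c) + \PP(G_1^c) + \exp(-k/8) \le 2k\exp(-\tilde n/8) + 2\exp(-k/8),
\]
which is the claimed inequality.

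I expect the main obstacle to be the interplay between the heavy tails and the ratio/median structure. Because the rewards need not be subgaussian, individual signed block means concentrate only at the polynomial Markov rate, so no per-block exponential guarantee exists; the exponential-in-$k$ rate must instead be manufactured by turning the weak per-block bound (bad probability $\le \tfrac14$, which is precisely what the constant $12$ inside $c$ purchases) into a binomial tail through the median. The remaining care is bookkeeping: decoupling the shared estimator $\widehat{\mu}(\mathcal{W}_0)$ from the per-block signed means, using $\alpha_{i,1}W_i \stackrel{d}{=} W_i$ so the signed means inherit the moment bound $M$, and keeping the denominators bounded away from $0$ so the ratio is well defined and the sign of the numerator can be tracked.
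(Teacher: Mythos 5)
Your proposal is correct and follows essentially the same route as the paper's proof: the same three-way decomposition (Hoeffding control of the Rademacher denominators giving $2k\exp(-\nicefrac{\tilde n}{8})$, the threshold $\nicefrac{\varepsilon}{4}$ for $\widehat{\mu}(\mathcal{W}_0)$, and a per-block bad probability $\leq \nicefrac14$ turned into $\exp(-\nicefrac{k}{8})$ via a binomial count), yielding the identical bound. The only cosmetic differences are that you re-derive the median-of-means concentration inline via von Bahr--Esseen plus Markov where the paper invokes its auxiliary Theorem~\ref{thm:bubeck-nonequalgroups} (based on Lemma~\ref{lemma:bub}), and you handle the signed block means by a direct per-block implication on the good events rather than the paper's median-algebra and distributional-symmetry step, which is arguably a slightly cleaner bookkeeping of the same argument.
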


    \begin{proof}
        The proof consists of two parts as we bound 
        \begin{equation}\label{eq:diam-prob}
            \PP(U-\mu > \varepsilon )
        \end{equation}
        with two different terms. Let
        \begin{equation}
        \begin{aligned}
            Z_\ell \doteq \frac{1}{2}\sum_{i \in B_\ell} (1- \alpha_i) \quad \text{ for} \quad \ell = 1, \dots, k. 
        \end{aligned}
        \end{equation}
        We observe that $\{Z_\ell\}_{\ell=1}^k$ are independent binomial variables with parameters $|B_\ell|$, $\nicefrac{1}{2}$ and expected value $\nicefrac{|B_\ell|}{2}$ for $\ell \in [k]$. Let us denote the events that follow by
        \begin{equation}
        \begin{aligned}
            &A_\ell \doteq \big\{|Z_\ell - \EE Z_\ell| \leq \nicefrac{|B_\ell|}{4} \big\} \quad \text{for}\quad \ell = 1, \dots, k \quad \text{and} \quad A \doteq \bigcap_{\ell=1}^k A_\ell. 
        \end{aligned}
        \end{equation}
        By the law of total probability
        \begin{align}
            \PP ( U - \mu > \varepsilon ) &= \PP( U - \mu > \varepsilon \,|\, A ) \PP(A) + \PP(  U - \mu > \varepsilon \,|\, \bar{A} ) \PP(\bar{A})\\  &\leq \PP( \{ U-\mu > \varepsilon \} \cap A ) + \PP(\bar{A}).
        \end{align}
        The second term is bounded from above by the union bound and Hoeffding's inequality as
        \begin{equation}
        \begin{aligned}
            &\PP(\,\bar{A}\,) \leq \PP\bigg( \,\bigcup_{\ell=1}^k \bar{A}_\ell \bigg)
            \leq \sum_{\ell =1}^k\PP(  |Z_\ell - \EE Z_\ell| > \nicefrac{|B_\ell|}{4} ) \\
            &\leq \sum_{\ell = 1}^k 2 \exp\bigg( -\frac{|B_\ell|}{8}\bigg) \leq 2 k \exp\bigg( -\frac{\tilde{n}}{8}\bigg)
        \end{aligned}
        \end{equation}
        For the first term recall that
        \begin{align}\label{eq:or}
            &\PP( U-\mu > \varepsilon \,|\,A ) = \PP( V > \varepsilon \,|\,A)\nonumber.
        \end{align}
        We observe that $Z_\ell$ is independent of the nominator, because $\alpha_{i,1}W_i$ and $\alpha_{i,1}$ are independent for each $i \in [n]$. Let us consider the following events:
        \begin{align}
            &B = \{V > \varepsilon \}, \quad \tilde{B} = \bigg\{\med_{\ell \in [k]} \bigg( \frac{\widehat{\mu}(\mathcal{W}_0)-  \frac{1}{|B_\ell|}\sum_{i \in B_\ell} \alpha_{i,1} W_i}{\frac{1}{2}}\bigg) > \varepsilon\bigg\}.
        \end{align}  
        Our key observation is that
        \begin{equation}
            B \cap A \subseteq \tilde{B} \cap A,
        \end{equation}
        because if $V$ is positive, then decreasing $Z_\ell$ in the denominator for every $\ell=1, \dots, k$ down to $\nicefrac{\tilde{n}}{4}$, increases the median.
        Consequently
        \begin{align}
            &\PP(B \cap A) \leq \PP ( \tilde{B} \cap A) \nonumber \\
            &\leq \PP\bigg( \, \med_{\ell \in [k]} \bigg( \widehat{\mu}(\mathcal{W}_0)-  \frac{1}{|B_\ell|}\sum_{i \in B_\ell} \alpha_{i,1} W_i\bigg) > \nicefrac{\varepsilon}{2}\bigg) \nonumber \\
            &= \PP\bigg( \, \widehat{\mu}(\mathcal{W}_0) -\med_{\ell \in [k]} \bigg(\frac{1}{|B_\ell|}\sum_{i \in B_\ell} \alpha_{i,1} W_i\bigg) > \nicefrac{\varepsilon}{2}\bigg) \nonumber \\
            & \leq \PP \bigg( \{\widehat{\mu}(\mathcal{W}_0) > \nicefrac{\varepsilon}{4}\} \cup \bigg\{-\med_{\ell \in [k]} \frac{1}{|B_\ell|}\sum_{i \in B_\ell} \alpha_{i,1} W_i > \nicefrac{\varepsilon}{4}\bigg\} \bigg)\nonumber \\
            &\leq \PP \big( \widehat{\mu}(\mathcal{W}_0) > \nicefrac{\varepsilon}{4}\big) + \PP \bigg(\med_{\ell \in [k]} \frac{1}{|B_\ell|}\sum_{i \in B_\ell} -\alpha_{i,1} W_i > \nicefrac{\varepsilon}{4}\bigg)\nonumber\\
            & = 2 \cdot \PP \big( \widehat{\mu}(\mathcal{W}_0) > \nicefrac{\varepsilon}{4}\big).
        \end{align}
        hence for $\varepsilon = 4\Big(\frac{(12M)^{1/a} }{\tilde{n}}\Big)^{\frac{a}{1+a}}$ by Theorem \ref{thm:bubeck-nonequalgroups} we have
        \begin{equation}
        \begin{aligned}
            &\PP\bigg(\,U - \mu > 4\Big(\frac{(12M)^{1/a} }{\tilde{n}}\Big)^{\frac{a}{1+a}}\, \bigg) \leq 2k\exp\bigg( -\frac{\tilde{n}}{8}\bigg) + 2 \exp\bigg( -\frac{k}{8}\bigg)
        \end{aligned}
        \vspace*{-2mm}
        \end{equation}
        \vspace*{-2mm}
        \end{proof}

        \section{Proof of Theorem \ref{thm:bandit-opt-regret}}
        By Lemma \ref{lemma:expectation-bound} and Lemma \ref{lemma:reg-decomp} Equation \eqref{eq:regret-bound} follows. For Equation \eqref{eq:regret-bound2} one can follow the proof of \citep[Proposition 1]{bubeck2013bandits}, which relies mostly on Hölder's inequality, i.e., for $n$ large enough, i.e., if for all $i\in [K]$ 
            \begin{equation}
            \begin{aligned}
                c \frac{M^{1/a}}{\Delta_i^{(1+a)/a}}\log(n) \geq  17^2 \quad \text{and} \quad c \frac{M^{1/a}}{\Delta_i^{(1+a)/a}}\log^3(n) \geq C, 
            \end{aligned}
            \end{equation}
            then we have
            \begin{equation}
            \begin{aligned}
                &R_n \leq \sum_{i: \Delta_i > 0} \Delta_i (\EE[ T_i(n)] ) ^{\frac{1}{1+a}} (\EE[T_i(n)] )^{\frac{a}{1+a}} \\
                &\leq \sum_{i: \Delta_i > 0} \Delta_i (\EE[ T_i(n)] ) ^{\frac{1}{1+a}}\bigg(\max\bigg(c \frac{M^{1/a}}{\Delta_i^{(1+a)/a}},17^2\bigg) \log^2(n) + C\bigg) ^{\frac{a}{1+a}}\\
                &\leq \sum_{i: \Delta_i > 0} \Delta_i (\EE[ T_i(n)] ) ^{\frac{1}{1+a}} \bigg(2c \frac{M^{1/a}}{\Delta_i^{(1+a)/a}} \log^3(n) \bigg) ^{\frac{a}{1+a}}\\
                & \leq \bigg(\sum_{i: \Delta_i > 0} \EE[ T_i(n)] \bigg)^{\frac{1}{1+a}} K^{\frac{a}{1 + a}} (2c\log^3(n))^{\frac{a}{1+ a}}M^{\frac{1}{1+ a}}\\
                &\leq n^{\frac{1}{1+a}} (K 2c\log^3(n))^{\frac{a}{1+ a}}M^{\frac{1}{1+ a}}.
                \end{aligned}
            \end{equation}
            In the last step we used that $\sum_{i=1}^K T_i(n) = n$ holds for all $n \in \NN$.\qed
        
        \begin{lemma}\label{lemma:expectation-bound}
        If we apply the RMM-UCB policy, then for $i \in [K]$, $i \neq 1$
        for $c_i \doteq 4^{\frac{1+a_i}{a_i}} \cdot12^{1/a_i}$
        \begin{equation}
          \EE [T_i(n)] \leq \max\bigg(c_i\bigg(\frac{M_i}{\Delta_i^{1+a_i}}\bigg)^{1/a_i}, 17^2\bigg) \log^2(n) + C. 
        \end{equation}
        \end{lemma}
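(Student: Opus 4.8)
The plan is to run the standard optimism-based UCB argument, adapted to the anytime, data-dependent parameters $p_t$ and $k_t^{(i)}$. First I would decompose $T_i(n) = 1 + \sum_{t=K+1}^n \BI(I_t = i)$ and isolate a deterministic threshold $u_i$ on the number of pulls of a suboptimal arm $i \neq 1$, writing
\[
T_i(n) \le u_i + \sum_{t=K+1}^n \BI\big(I_t = i,\ T_i(t-1) \ge u_i\big).
\]
The threshold $u_i$ should be chosen so that two things hold once $T_i(t-1) \ge u_i$: (i) the PAC radius from Theorem \ref{thm:ucb} for arm $i$ falls below $\Delta_i$, and (ii) the block count satisfies $k_t^{(i)} = \lfloor 17\log t\rfloor$, i.e.\ the $\sqrt{T_i(t-1)}$ branch of $k_t^{(i)}$ is inactive. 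Solving $4\big((12M_i)^{1/a_i}/\tilde{n}\big)^{a_i/(1+a_i)} \le \Delta_i$ for $\tilde{n} = \lfloor T_i(t-1)/k_t^{(i)}\rfloor$ yields exactly $\tilde{n} \ge c_i (M_i/\Delta_i^{1+a_i})^{1/a_i}$ with $c_i = 4^{(1+a_i)/a_i} 12^{1/a_i}$, while (ii) forces $T_i(t-1) \ge 17^2\log^2 t$. Combining the two requirements after bounding $\log t \le \log n$ and $k_t^{(i)} \le 17\log n$, and sweeping the cost of the floors and the small-$n$ regime into the additive constant, gives the stated form $u_i = \max\{c_i(M_i/\Delta_i^{1+a_i})^{1/a_i},\, 17^2\}\log^2 n$.

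The core step is the optimism comparison. Whenever $I_t = i$ the selection rule forces $U_i(T_i(t-1), p_t, k_t^{(i)}) \ge U_1(T_1(t-1), p_t, k_t^{(1)})$. I would argue this cannot happen once $T_i(t-1) \ge u_i$ unless one of two bad events occurs: the optimal arm is underestimated, $U_1 < \mu_1$, or arm $i$ overshoots, $U_i \ge \mu_i + \Delta_i = \mu_1$; for if both fail then $U_i < \mu_1 \le U_1$, contradicting selection. Hence
\[
\BI\big(I_t = i,\ T_i(t-1)\ge u_i\big) \le \BI(U_1 < \mu_1) + \BI\big(U_i \ge \mu_i + \Delta_i,\ T_i(t-1)\ge u_i\big).
\]
Taking expectations and summing, the first family is controlled by the exact-coverage corollary to Theorem \ref{thm:exact-coverage}: since $\mu_1\in\Theta_n$ implies $\mu_1\le U_1$, we get $\PP(U_1 < \mu_1) \le p_t = 1/\lceil 1 + t\log^2 t\rceil$, whose sum $\sum_t 1/(t\log^2 t)$ converges. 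The second family is controlled by Theorem \ref{thm:ucb}: on $\{T_i(t-1)\ge u_i\}$ the radius is below $\Delta_i$, so the overshoot probability is at most $(m-r)\big(2k_t^{(i)}\exp(-\tilde{n}/8) + 2\exp(-k_t^{(i)}/8)\big)$. Substituting $m-r \le \lceil 1 + t\log^2 t\rceil$, $k_t^{(i)} = \lfloor 17\log t\rfloor$, and $\tilde{n} \ge 17\log n \ge 17\log t$ makes both exponentials of order $t^{-17/8}$, so each summand is $O(\mathrm{polylog}(t)\, t^{-9/8})$ and the sum again converges. Absorbing both convergent sums into $C$ gives $\EE[T_i(n)] \le u_i + C$, which is the claim.

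The main obstacle is the measure-theoretic coupling between the random, adaptively chosen count $T_i(t-1)$ and the confidence statements, since both Theorem \ref{thm:exact-coverage} and Theorem \ref{thm:ucb} are stated for a fixed i.i.d.\ sample of deterministic size. I would resolve this in the usual way by working on a fixed per-arm reward tape: the $s$-th pull of arm $i$ reveals the $s$-th entry of an i.i.d.\ sequence from $\nu_i$, so the UCB after $s$ pulls is a function of $s$ fixed i.i.d.\ rewards together with the independent resampling signs and permutation, to which both theorems apply verbatim. Replacing $\{I_t=i,\ T_i(t-1)\ge u_i\}$ by a union over admissible sample sizes $s \ge u_i$ then lets me invoke the per-size guarantees and close the argument; additional care is needed only to check that the floors in $\tilde{n} = \lfloor T_i(t-1)/k_t^{(i)}\rfloor$ and in $k_t^{(i)}$, as well as the threshold regime $\log n \ge 17$, cost at most constants that are folded into $C$.
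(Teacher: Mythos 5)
Your proposal is correct and takes essentially the same route as the paper's own proof: the same three-way split (arm-1 underestimation, arm-$i$ overshoot beyond the PAC radius, fewer than $u_i$ pulls) with the same threshold $u_i$, the same use of Theorem \ref{thm:exact-coverage} to get $\PP(U_1 < \mu_1) \leq p_t$, and the same use of Theorem \ref{thm:ucb} to make the overshoot term $O(\log^3(t)\, t^{-9/8})$ and hence summable into $C$. Even your handling of the adaptive sample size mirrors the paper, which conditions on $T_i(t-1)=s$ and sums over $s \geq u_i$, and like you it absorbs the flooring effects and the small-$n$ regime (where $\log n < 17$) into the additive constant.
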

        \begin{proof}
        Our proof strongly relies on \citep{bubeck2013bandits} and \citep{khorasani2023maximum}.        
        First observe that if $I_t = i$ for $i \neq 1$, then at least one of the following inequalities holds
        \begin{align}
            &U_1(T_1(t-1), p_t, k_t^{(1)}) \leq \mu_1,\label{eq:a}\\
            &U_i(T_i(t-1),  p_t, k_t^{(i)}) > \mu_i + M_i^{\frac{1}{1+a_i}} \bigg( \frac{c_i  k_t^{(i)}}{T_i(t-1)}\bigg)^{\frac{a_i}{1+a_i}}, \label{eq:b}\\
            &T_i(t-1) <  u_i
        \end{align}
        with $c_i \doteq 4^{\frac{1+a_i}{a_i}} \cdot12^{1/a_i}$ and
        \begin{equation*}
            u_i \doteq \bigg\lceil \max\bigg(c_i \bigg(\frac{M_i}{\Delta_i^{(1+a_i)}}\bigg)^{1/a_i}, 17^2\bigg)\log^2(n)\bigg\rceil.
        \end{equation*}
        If all of them were false, then
        \begin{equation}
        \begin{aligned}
            &U_1(T_1(t-1), p_t, k_t^{(1)})> \mu_1 = \mu_i + \Delta_i       \geq \mu_i + M_i^{1/(1+a_i)} \bigg( \frac{ c_i \log^2(n)}{T_i(t-1)} \bigg)^{a_i/(1+a_i)}\\
            &\geq \mu_i + M_i^{1/(1+a_i)} \bigg( \frac{c k_t^{(i)}}{T_i(t-1)} \bigg)^{a_i/(1+a_i)} \geq U_i(T_i(t-1),  p_t, k_t^{(i)}),
        \end{aligned}
        \end{equation}
        which contradicts $I_t =i$. We prove that \eqref{eq:a} or \eqref{eq:b} occur with small probability. Recall that $U_i = \mu_1 + V_i$. Let us denote the bad events above as
        \begin{equation*}
        \begin{aligned}
            &B_t^{(1)} \doteq \{V_1(T_1(t-1),  p_t, k_t^{(1)}) \leq 0\},\\
            &B_t^{(2)} \doteq \bigg\{V_i(T_i(t-1),  p_t, k_t^{(i)}) > M_i^{1/(1+a_i)} \bigg( \frac{c_i k_t^{(i)}}{T_i(t-1)}\bigg)^{a_i/(1+a_i)}\bigg\}.
        \end{aligned}
        \end{equation*}
        and the good events as
        \begin{equation*}
        \begin{aligned}
            &G_t \doteq \{\,T_i(t-1) < u_i\, \}
        \end{aligned}
        \end{equation*}
        for $t = 1, \dots, n$. By the first observation we have
        \begin{equation}
        \begin{aligned}
            &\EE[ T_i(n)] =  \EE\bigg[\sum_{t=1}^n \BI(I_t =i) \bigg] = \EE\bigg[\sum_{t=1}^n \BI(I_t =i)\BI(G_t)\bigg] + \EE\bigg[\sum_{t=u_i}^n \BI(I_t =i)\BI(\bar{G}_t)\bigg]\\ 
            &\leq u_i + \sum_{t=u_i}^n \PP((B_t^{(1)} \cup B_t^{(2)}) \cap \bar{G}_t) \leq u_i +  \sum_{t=u_i}^n \PP(B_t^{(1)}) + \sum_{t=u_i}^n \PP(B_t^{(2)} \cap \bar{G}_t) . 
        \end{aligned}
        \end{equation}

        For each $i \in [K]$
        by  Theorem \ref{thm:exact-coverage}
        \begin{equation}
            \PP(B_t^{(1)}) = p_t = \frac{1}{\ceil{1 + t \log^2(t)}}.
        \end{equation}
        Additionally, we have $\sqrt{u_i} \geq 17 \log(t)$ for $t \leq n$, henceforth by Theorem \ref{thm:ucb} we have
        \begin{equation}
        \begin{aligned}
            &\PP(B_t^{(2)} \cap \bar{G}_t) \\
            &\leq \PP\bigg(V_i(T_i(t-1),  p_t, k_t^{(i)})  >  M_i^{\frac{1}{1+a_i}} \bigg( \frac{c_i k_t^{(i)}}{T_i(t-1)}\bigg)^{\frac{a_i}{1+a_i}}, T_i(t-1) \geq u_i \bigg)\\
            &= \sum_{s=u_i}^t \PP\bigg(V_i(s,  p_t, k_t^{(i)})  >  M_i^{\frac{1}{1+a_i}} \bigg( \frac{c_i k_t^{(i)}}{s}\bigg)^{\frac{a_i}{1+a_i}}\,|\,T_i(t-1)=s\bigg) \cdot\PP(T_i(t-1) = s)\\
            &\leq \sum_{s=u_i}^t 2\ceil{t\log^2(t)}\bigg(k_t^{(i)}\exp\bigg(-\frac{s}{k_t^{(i)} 8}\bigg) + \exp\bigg( -\frac{k_t^{(i)}}{8}\bigg)\bigg) \cdot\PP(T_i(t-1) = s)\\
            &\leq \sum_{s=u_i}^t 2\ceil{t\log^2(t)}\cdot\\
            & \bigg(\sqrt{s}\wedge 17\log(t)\exp\bigg(-\frac{s}{\lfloor\sqrt{s}\wedge17\log(t)\rfloor \cdot 8}\bigg) + \exp\bigg( -\frac{\sqrt{s}\wedge 17\log(t) }{8}\bigg)\bigg) \cdot\PP(T_i(t-1) = s)\\
            &\leq \sum_{s=u_i}^t 2\ceil{t\log^2(t)}\bigg(17\log(t) \exp\bigg(-\frac{\sqrt{s}}{8}\bigg) + \exp\bigg( -\frac{\sqrt{s} \wedge 17\log(t)}{8}\bigg)\bigg) \cdot\PP(T_i(t-1) = s)\\
            &\leq 2\ceil{t\log^2(t)}\bigg(17\log(t) \exp\bigg(-\frac{\sqrt{u_i}}{8}\bigg) + \exp\bigg( -\frac{\sqrt{u_i} \wedge 17\log(t)}{8}\bigg)\bigg)\\
            &\leq 2\ceil{t\log^2(t)}\bigg(17 \log(t)\exp\bigg(-\frac{17\log(t)}{8}\bigg) + \exp\bigg( -\frac{17\log(t)}{8}\bigg)\bigg)\\
            &\leq 2\ceil{t\log^2(t)}\bigg(17 \log(t)\frac{1}{t^{2 + \gamma}} + \frac{1}{t^{2 + \gamma}}\bigg),
        \end{aligned}
        \end{equation}
        where $\gamma = 1/8$. We can observe that the right hand side is summable in $t$.
        Hence, the key quantity can be bounded from above as
        \begin{equation}
        \begin{aligned}
            &\EE[T_i(n)]
            \leq u_i +  \sum_{t=u_i}^n \bigg( \frac{1}{1 + \ceil{t \log^2(t)}} + \frac{\tilde{c}\log^3(t)}{t^{1+\gamma}}\bigg) \leq u_i + C.
        \end{aligned}
        \vspace*{-2mm}
        \end{equation}
        \vspace*{-2mm}
        \end{proof}

        \section{Auxiliary Results}

        We refer to fundamental results from \citep{bubeck2013bandits}:
        \begin{lemma}\label{lemma:bub}
             Assume A\ref{ass:bandit-1} and A\ref{ass:bandit-2}. Let $\bar{\mu}$ be the empirical mean. Then for any $\delta \in (0,1)$ 
            \begin{equation}
                \PP\bigg( \bar{\mu} \leq \mu + \bigg( \frac{3M}{\delta n^a}\bigg) ^{\frac{1}{1+a}} \bigg) \geq 1 -\delta.
            \end{equation}
        \end{lemma}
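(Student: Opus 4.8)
The plan is to establish a one-sided polynomial tail bound for the empirical mean using only the assumed $(1+a)$-th central moment, for which the natural tool is a moment inequality valid in the fractional range $p = 1+a \in (1,2]$. First I would center the sample by setting $Y_i \doteq X_i - \mu$, so that the $Y_i$ are i.i.d., mean-zero, and satisfy $\EE[|Y_i|^{1+a}] = M$ by A\ref{ass:bandit-2}; writing $S_n \doteq \sum_{i=1}^n Y_i$, the event of interest becomes $\{S_n > nt\}$ with the threshold $t \doteq (3M/(\delta n^a))^{1/(1+a)}$.

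The key step is to control the $(1+a)$-th absolute moment of $S_n$. Since $1+a \in (1,2]$, the usual route through the variance is unavailable (the second moment need not be finite), so I would invoke the classical von Bahr--Esseen inequality, which for independent mean-zero summands and any exponent $p \in [1,2]$ gives $\EE[|S_n|^{p}] \leq 2 \sum_{i=1}^n \EE[|Y_i|^{p}]$. Taking $p = 1+a$ yields $\EE[|S_n|^{1+a}] \leq 2nM$.

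Next I would pass from this moment bound to a tail bound via Markov's inequality applied to the nonnegative variable $|S_n|^{1+a}$: for the threshold $nt > 0$,
\begin{equation*}
\PP(S_n > nt) \;\leq\; \PP(|S_n| > nt) \;=\; \PP\big(|S_n|^{1+a} > (nt)^{1+a}\big) \;\leq\; \frac{2nM}{(nt)^{1+a}} \;=\; \frac{2M}{n^{a}\,t^{1+a}}.
\end{equation*}
Substituting $t^{1+a} = 3M/(\delta n^a)$ collapses the right-hand side to $2\delta/3 \leq \delta$, so $\PP(\bar\mu - \mu > t) \leq \delta$ and the complementary event has probability at least $1-\delta$, which is exactly the claim; the constant $3$ (rather than the sharper $2$ that the argument actually delivers) merely absorbs the von Bahr--Esseen factor and leaves a comfortable margin.

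The main obstacle is essentially the selection of the correct concentration tool: because only a moment of fractional order $1+a < 2$ is available, Chebyshev's inequality cannot be applied directly, and a naive truncation argument — splitting each $Y_i$ at a level $B$, bounding the truncated variance by $B^{1-a}M$ and the discarded tail mass by $nM/B^{1+a}$, and then optimizing over $B$ — would reproduce the same rate but with considerably messier constants. The von Bahr--Esseen inequality circumvents this optimization entirely. I emphasize that symmetry (A\ref{ass:bandit-1}) is not needed for this lemma — the bound holds for any distribution with finite $(1+a)$-th central moment — although under symmetry one could alternatively obtain the moment estimate by a Rademacher symmetrization of $S_n$.
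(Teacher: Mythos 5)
Your proof is correct. Note that the paper itself gives no proof of this lemma: it is quoted verbatim as an auxiliary result from \citet{bubeck2013bandits}, and your argument---centering, the von Bahr--Esseen bound $\EE\big[|S_n|^{1+a}\big]\le 2nM$, then Markov's inequality applied to $|S_n|^{1+a}$---is exactly the standard proof underlying that citation, which indeed delivers the sharper constant $2$, the stated $3$ serving only as slack. Your remark that the symmetry assumption A\ref{ass:bandit-1} is not needed is also accurate; only the moment condition A\ref{ass:bandit-2} enters the argument.
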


        Theorem \ref{theorem:bubeck} from \cite{bubeck2013bandits} is our main tool to prove the concentration inequality for MoM estimates.
        \begin{theorem}\label{theorem:bubeck}
            Let $\delta \in (0,1)$ and $a \in (0, 1]$. Assume A\ref{ass:iid}, A\ref{ass:finite_mom} and $n = k \tilde{n}$. Let $k = \lfloor \min(8 \log(e^{1/8}/\delta), n/2)\rfloor$, then for the MoM estimator $\widehat{\mu}$ we have
            \begin{equation}
                \PP\bigg( \widehat{\mu} \leq \mu + (12M)^{\frac{1}{1+a}}\bigg( \frac{16 \log(e^{1/8}/\delta)}{n}\bigg) ^{\frac{a}{1+a}} \bigg) \geq 1 -\delta.
            \end{equation}
        \end{theorem}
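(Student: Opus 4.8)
The plan is to run the classical \emph{confidence-boosting} argument for median-of-means: feed the single-block deviation guarantee of Lemma~\ref{lemma:bub} into a Hoeffding tail bound on the number of blocks whose empirical mean is atypically large. The whole argument is one-sided, which matches exactly the form of the conclusion.

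First I would recall that $\widehat{\mu}$ is the empirical median of the $k$ block means $\bar{\mu}_1, \dots, \bar{\mu}_k$, where $\bar{\mu}_\ell$ is the empirical mean of the $\tilde{n} = n/k$ observations in block $B_\ell$; since $n = k\tilde{n}$, these are independent empirical means, each based on a fresh i.i.d.\ sample of size $\tilde{n}$. Applying Lemma~\ref{lemma:bub} to a single block with the \emph{fixed} confidence parameter $\delta_0 = \tfrac{1}{4}$, and using $3M/\delta_0 = 12M$, gives
\[
\PP\Big( \bar{\mu}_\ell \leq \mu + (12M)^{1/(1+a)}\, \tilde{n}^{-a/(1+a)} \Big) \geq \tfrac{3}{4},
\]
so each block mean exceeds the per-block threshold $\tau \doteq \mu + (12M)^{1/(1+a)}\tilde{n}^{-a/(1+a)}$ with probability at most $q \leq \tfrac{1}{4}$, independently across blocks.

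Next I would verify that $\tau$ lies below the target threshold of the statement. Substituting $\tilde{n} = n/k$ yields $\tilde{n}^{-a/(1+a)} = (k/n)^{a/(1+a)}$, and because $k \leq 8\log(e^{1/8}/\delta) \leq 16\log(e^{1/8}/\delta)$ we get $\tau \leq \mu + (12M)^{1/(1+a)}\big(16\log(e^{1/8}/\delta)/n\big)^{a/(1+a)}$, so it suffices to bound $\PP(\widehat{\mu} > \tau)$. The empirical median can exceed $\tau$ only if at least half of the block means do; writing $Y = \sum_{\ell=1}^k \BI(\bar{\mu}_\ell > \tau)$ with $\EE Y = kq \leq k/4$, Hoeffding's inequality for the sum of independent $[0,1]$-valued indicators gives
\[
\PP(\widehat{\mu} > \tau) \leq \PP\big(Y \geq \tfrac{k}{2}\big) \leq \exp\big(-2k(\tfrac{1}{2}-q)^2\big) \leq \exp(-k/8),
\]
using $\tfrac{1}{2}-q \geq \tfrac{1}{4}$.

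Finally I would convert this exponent into the claimed confidence level, and this is where the precise calibration of $k$ does the work. Since $8\log(e^{1/8}/\delta) = 1 + 8\log(1/\delta)$ and $\lfloor 1 + x\rfloor \geq x$, whenever the first argument attains the minimum the floor yields $k \geq 8\log(1/\delta)$, so that $\exp(-k/8) \leq \delta$ and the bound closes. The main obstacle is exactly this two-sided tension in the choice of $k$: it must be \emph{small} enough to keep the per-block threshold below the target (the third step needs $k \leq 16\log(e^{1/8}/\delta)$) yet \emph{large} enough to drive the median tail below $\delta$ (needing $k \geq 8\log(1/\delta)$). The $e^{1/8}$ correction inside the logarithm together with the floor are precisely what reconcile these two requirements, so the bulk of the care in the proof goes into tracking these constants rather than into any deep probabilistic step.
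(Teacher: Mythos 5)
Your core argument is sound, and it is in fact the same one the paper relies on: the paper never proves Theorem~\ref{theorem:bubeck} itself (it is imported from \citet{bubeck2013bandits}); what it proves is the reparameterized Theorem~\ref{thm:bubeck-nonequalgroups}, by exactly your route --- Lemma~\ref{lemma:bub} applied per block with confidence parameter $\nicefrac{1}{4}$ (which is where the $12M$ comes from), then Hoeffding's inequality on the binomial count of deviant blocks, giving failure probability $\exp(-\nicefrac{k}{8})$. Up to that point your proof is correct, including the constant bookkeeping and the observation that $k \leq 8\log(e^{1/8}/\delta)$ makes the per-block threshold dominated by the stated one.

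The genuine gap is in your last step, and you half-admit it yourself: you establish $\exp(-\nicefrac{k}{8}) \leq \delta$ only ``whenever the first argument attains the minimum,'' i.e.\ when $8\log(e^{1/8}/\delta) \leq n/2$. In the opposite regime --- $\delta$ so small that $8\log(e^{1/8}/\delta) > n/2$, hence $k = \lfloor n/2 \rfloor$ --- your bound yields only $\exp(-\lfloor n/2\rfloor/8)$, which for fixed $n$ does not shrink as $\delta \to 0$, so it is not $\leq \delta$; the closing claim that the $e^{1/8}$ correction and the floor ``reconcile'' the two requirements is simply false there (they only fix the rounding in the first case). Worse, this hole cannot be patched by a cleverer argument, because in that regime the statement itself fails: take $a=1$, $M=1$, $n$ fixed and even, so $k = n/2$ and blocks have size $2$; let $T$ be the claimed threshold and consider the symmetric law $\PP(X = \pm c) = 1/(2c^2)$, $\PP(X = 0) = 1 - 1/c^2$ with $c = 3T$, which has unit second central moment. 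Each block mean exceeds $T$ with probability $q$ of order $n/\log(e^{1/8}/\delta)$, so the median exceeds $T$ with probability at least $\binom{k}{\lceil k/2\rceil+1} q^{\lceil k/2\rceil+1}(1-q)^{k - \lceil k/2\rceil - 1}$, which decays only \emph{polynomially} in $1/\log(1/\delta)$ for fixed $n$, while $\delta$ decays exponentially --- so for $\delta$ small enough the failure probability exceeds $\delta$. (This is consistent with known impossibility results: under a bare $(1+a)$-moment assumption no mean estimator can deliver such guarantees at confidence levels exponentially small in the sample size.) This edge regime is presumably why the paper only cites Theorem~\ref{theorem:bubeck} and conducts its own analysis entirely with the $\exp(-\nicefrac{k}{8})$ parameterization of Theorem~\ref{thm:bubeck-nonequalgroups}, which your argument does prove; to make your write-up correct you would have to either restrict to $\delta \geq e^{1/8 - n/16}$ or adopt that parameterization.
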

        We use a reparameterized version of Theorem \ref{theorem:bubeck}. The proof for almost equal group sizes is included for the sake of completeness.
        \begin{theorem}\label{thm:bubeck-nonequalgroups}
            Assume A\ref{ass:iid} and A\ref{ass:finite_mom}. Let $k \leq n$ and $\tilde{n} = \lfloor n/k \rfloor$, then for the MoM estimator $\widehat{\mu}$,
            \begin{equation}
                \PP\bigg( \widehat{\mu} > \mu + (12M)^{\frac{1}{1+a}}\bigg( \frac{1}{\tilde{n}}\bigg) ^{\frac{a}{1+a}} \bigg) \leq \exp\bigg( - \frac{k}{8}\bigg).
            \end{equation}
        \end{theorem}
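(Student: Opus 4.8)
The plan is to reduce the one-sided deviation of the median-of-means to a binomial tail over the blocks, following the classical scheme but tracking the $(1+a)$-th moment directly so that no finite variance is needed when $a<1$. Write $\hat\mu_\ell \doteq \frac{1}{|B_\ell|}\sum_{i\in B_\ell} X_i$ for the block means, so that $\muM = \med_{\ell\in[k]} \hat\mu_\ell$, and abbreviate the target radius by $\varepsilon \doteq (12M)^{1/(1+a)}\,\tilde{n}^{-a/(1+a)}$.

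First I would control a single block. Since $1+a \in (1,2]$ and the centred variables $X_i-\mu$ are i.i.d.\ with mean zero and $\EE|X_i-\mu|^{1+a}=M$, the von Bahr--Esseen inequality gives $\EE\big|\sum_{i\in B_\ell}(X_i-\mu)\big|^{1+a}\le 2\,|B_\ell|\,M$, so that $\EE|\hat\mu_\ell-\mu|^{1+a}\le 2M/|B_\ell|^{a}\le 2M/\tilde{n}^{a}$, where the last step uses $|B_\ell|\ge\tilde{n}$ and monotonicity in the block size. Bounding the one-sided event by $\{|\hat\mu_\ell-\mu|>\varepsilon\}$ and applying Markov's inequality to $|\hat\mu_\ell-\mu|^{1+a}$ then yields
\begin{equation}
\PP(\hat\mu_\ell-\mu>\varepsilon)\;\le\;\frac{\EE|\hat\mu_\ell-\mu|^{1+a}}{\varepsilon^{1+a}}\;\le\;\frac{2M/\tilde{n}^{a}}{12M/\tilde{n}^{a}}\;=\;\frac{1}{6}\;\doteq\;q,
\end{equation}
uniformly in $\ell\in[k]$; this is precisely where the constant $12=2\cdot 6$ is engineered.

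Next I would pass from the median to a count. Because the blocks are disjoint, the indicators $\BI(\hat\mu_\ell-\mu>\varepsilon)$ are independent, each Bernoulli with success probability at most $q$, and the event $\{\muM-\mu>\varepsilon\}$ forces more than half of the block means to exceed $\mu+\varepsilon$. Hence, with $Z\doteq\sum_{\ell=1}^{k}\BI(\hat\mu_\ell-\mu>\varepsilon)$ we have $\{\muM-\mu>\varepsilon\}\subseteq\{Z\ge k/2\}$ and $\EE Z\le qk=k/6$. Finally I would invoke Hoeffding's inequality for the bounded summands to close the estimate:
\begin{equation}
\PP\big(Z\ge\tfrac{k}{2}\big)\;\le\;\PP\big(Z-\EE Z\ge\tfrac{k}{2}-\tfrac{k}{6}\big)\;\le\;\exp\!\Big(-2k\big(\tfrac{1}{3}\big)^{2}\Big)\;=\;\exp\!\Big(-\tfrac{2k}{9}\Big)\;\le\;\exp\!\Big(-\tfrac{k}{8}\Big),
\end{equation}
the last inequality holding because $\tfrac{2}{9}>\tfrac{1}{8}$.

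Chaining the three displays gives the stated bound. The only genuinely delicate point is the block-level moment control: for $a<1$ there is no finite variance, so Chebyshev is unavailable and the von Bahr--Esseen (Marcinkiewicz--Zygmund type) inequality for exponents in $(1,2]$ is what I expect to be the crux; the unequal group sizes are harmless, since the bound $2M/|B_\ell|^{a}$ is decreasing in $|B_\ell|\ge\tilde{n}$ and therefore every block satisfies the same exceedance probability $q=1/6$.
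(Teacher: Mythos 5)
Your proof is correct and takes essentially the same route as the paper's: a per-block tail bound engineered from the $(1+a)$-th moment so that each block mean exceeds $\mu+\varepsilon$ with probability a constant below $1/2$, followed by Hoeffding's inequality on the count of bad blocks; the paper obtains its block bound (probability at most $1/4$) by citing Lemma~\ref{lemma:bub}, whereas you re-derive it via von Bahr--Esseen plus Markov, getting $1/6$. Your self-contained version even yields the slightly stronger $\exp(-2k/9)$, which you correctly relax to the stated $\exp(-k/8)$.
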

        \begin{proof}
            Let $\eta > 0$ and $Y_\ell \doteq \BI( \bar{\mu}_\ell > \mu + \eta)$ for $\bar{\mu}_\ell \doteq \frac{1}{|B_\ell|} \sum_{i \in B_\ell} X_i$ and $\ell = 1, \dots, k$. By Lemma \ref{lemma:bub} $Y_\ell$ is a Bernoulli variable with\vspace{-2mm}
            \begin{equation}
                p_\ell \leq \frac{3M}{|B_\ell|^a \eta^{1+a}}.
            \end{equation}
            For\vspace{-2mm}
            \begin{equation}
                \eta = (12M)^{1/(1+a)} \bigg(\frac{1}{\tilde{n}}\bigg)^{\frac{a}{1+a}}
            \end{equation}
            we have $p_\ell \leq 1/4$ for all $\ell \in [k]$. Finally, by Hoeffding's inequality for a binomial variable $Z$ with parameters $k$ and $1/4$ we have
            \begin{equation}
                \PP( \widehat{\mu}  > \mu + \eta ) \leq \PP( Z \geq k/2 ) \leq \exp\bigg( - \frac{k}{8}\bigg).
                \vspace{-4mm}
            \end{equation}
        \end{proof}
        
        The regret decomposition lemma \cite[Lemma 4.5]{lattimore2020bandit} is one of the main tools to prove Theorem \ref{thm:bandit-opt-regret}.
        \begin{lemma}\label{lemma:reg-decomp}
        For any policy and stochastic bandit environment $\{\nu_1, \dots, \nu_K\}$  and horizon $n \in \NN$ the regret $R_n$ of the policy in the environment satisfies\vspace{-1mm}
        \begin{equation}
            R_n = \sum_{i=1}^K \Delta_i \EE[T_i(n)].
        \end{equation}
    \end{lemma}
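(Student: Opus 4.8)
The plan is to start from the definition of the regret in \eqref{def:regret} and rewrite the expected cumulative reward by conditioning on the arm chosen at each round. Since the paper assumes $\mu_1 > \mu_i$ for $i \geq 2$, the optimal mean is $\max_{i \in [K]} \mu_i = \mu_1$, so the first term in \eqref{def:regret} equals $n\mu_1$, and the whole task reduces to evaluating $\EE[\sum_{t=1}^n X_t]$ in terms of the expected pull counts $\EE[T_i(n)]$.

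First I would set up the natural filtration $\mathcal{F}_{t-1}$ generated by the past actions and rewards $I_1, X_1, \dots, I_{t-1}, X_{t-1}$, so that the action $I_t$ is $\mathcal{F}_{t-1}$-measurable (any policy chooses $I_t$ from the observed history) while the reward $X_t$ is drawn from $\nu_{I_t}$ independently of $\mathcal{F}_{t-1}$ given $I_t$. The key identity is then $\EE[X_t \mid I_t] = \mu_{I_t}$, which holds because, conditioned on $\{I_t = i\}$, the reward $X_t$ has distribution $\nu_i$ with mean $\mu_i$. By the tower property this gives $\EE[X_t] = \EE[\mu_{I_t}] = \sum_{i=1}^K \mu_i\,\PP(I_t = i)$.

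Next I would sum over the horizon and exchange the order of summation. Writing $T_i(n) = \sum_{t=1}^n \BI(I_t = i)$ and using linearity of expectation,
\[
\EE\Big[\sum_{t=1}^n X_t\Big] = \sum_{t=1}^n \sum_{i=1}^K \mu_i\,\PP(I_t=i) = \sum_{i=1}^K \mu_i\,\EE[T_i(n)].
\]
Since exactly one arm is pulled per round, $\sum_{i=1}^K T_i(n) = n$ holds deterministically, hence $n\mu_1 = \sum_{i=1}^K \mu_1\,\EE[T_i(n)]$. Subtracting the two displays and using $\Delta_i = \mu_1 - \mu_i$ yields
\[
R_n = \sum_{i=1}^K (\mu_1 - \mu_i)\,\EE[T_i(n)] = \sum_{i=1}^K \Delta_i\,\EE[T_i(n)],
\]
which is the claimed decomposition.

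The main obstacle is making the conditioning rigorous: one must justify $\EE[X_t \mid I_t] = \mu_{I_t}$ even though $I_t$ is chosen adaptively from the data. This rests on the measurability of the policy with respect to the history and on the conditional independence of $X_t$ from the past given $I_t$; once these are in place, the identity is a clean application of the tower property. Notably, the argument uses neither the symmetry assumption A\ref{ass:bandit-1} nor the moment assumption A\ref{ass:bandit-2}, so the decomposition holds for any policy and any bandit environment with finite means.
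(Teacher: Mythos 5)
Your proof is correct and is exactly the standard argument behind this lemma, which the paper does not prove itself but imports directly as Lemma 4.5 of \citet{lattimore2020bandit}; conditioning on $I_t$, applying the tower property, summing $\EE[T_i(n)] = \sum_{t=1}^n \PP(I_t = i)$, and using $\sum_{i=1}^K T_i(n) = n$ is precisely that reference's proof. One cosmetic point: since RMM-UCB is a randomized policy (Rademacher signs, tie-breaking permutations), $I_t$ is measurable only with respect to the history \emph{enlarged} by the policy's internal randomization, not the action-reward history alone — your argument is unaffected once $\mathcal{F}_{t-1}$ is taken to include that randomness, as the conditional independence $\EE[X_t \mid I_t] = \mu_{I_t}$ which you correctly identified as the crux still holds.
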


    \section{Numerical Experiments}
    \subsection{Experiment: Average cumulative regret for $\varepsilon_p = 0.5$}
    This experiment presents the average cumulative regret of each algorithm given $\varepsilon_p = 0.5$, every other simulation setting is the same as in Section \ref{sec:experiments}. The results are illustrated in Figure \ref{fig:avg_regret_supplement}. It shows, that for $\Delta = 0.1$, the proposed RMM-UCB still outperforms all the other algorithms, however for the larger suboptimality gap $\Delta = 0.5$, the cumulative regret of MR-APE is smaller than RMM-UCB. Despite this, we conclude that the  RMM-UCB can achieve lower cumulative regrets compared to existing solutions in case of difficult bandit problems, i.e., when the suboptimality gap is smaller and the arms distributions are heavy-tailed.
    \begin{figure*}[!t]
    \centering
    \subfloat[$\Delta = 0.1$]{\includegraphics[width=0.48\columnwidth]{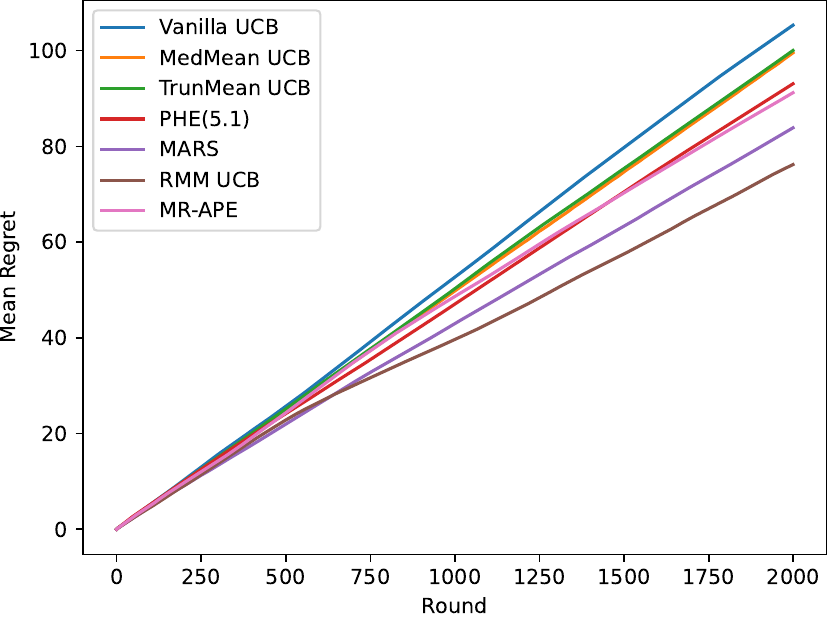}%
    \label{fig:reg_mean_0.1gap_pa1.55}}
    \hspace{3mm}
    \hfil
    \subfloat[$\Delta = 0.5$]{\includegraphics[width=0.48\columnwidth]{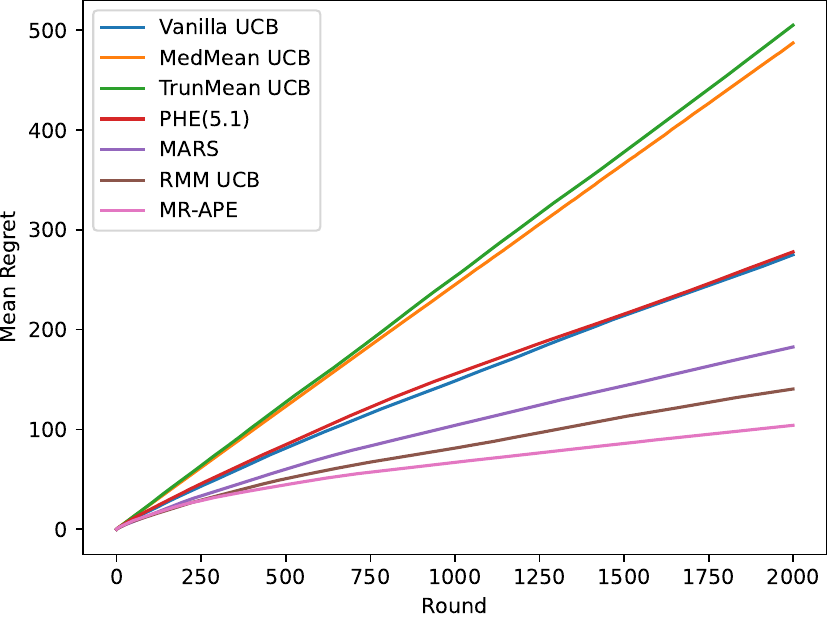}
    \label{fig:reg_mean_0.5gap_pa1.55}}
    \caption{Comparison of average cumulative regrets for Pareto bandits, $\varepsilon_p = 0.5$.}
    \label{fig:avg_regret_supplement}
    \end{figure*}

    \begin{figure*}[!t]
    \centering
    \subfloat[$\Delta = 0.1$]{\includegraphics[width=0.48\columnwidth]{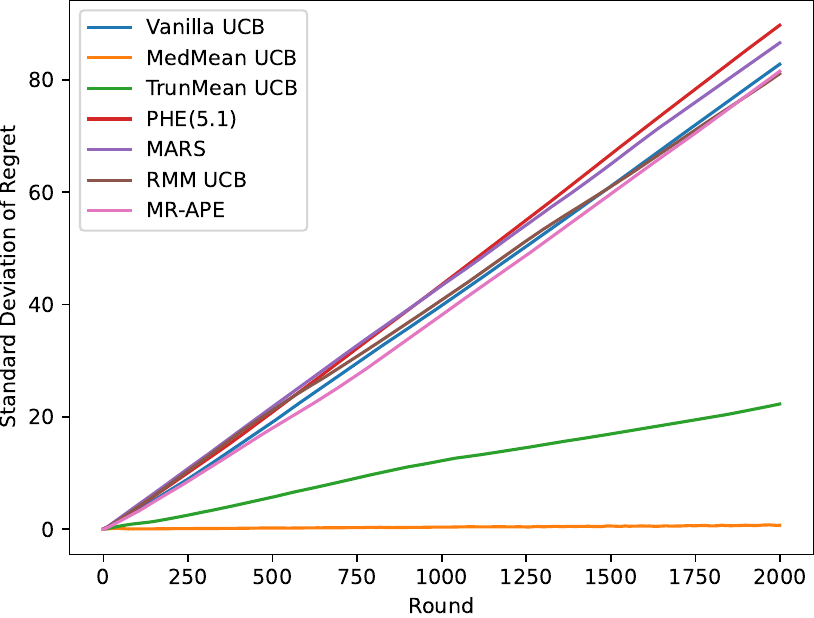}%
    \label{fig:reg_std_0.1gap_pa1.55}}
    \hspace{3mm}
    \hfil
    \subfloat[$\Delta = 0.5$]{\includegraphics[width=0.48\columnwidth]{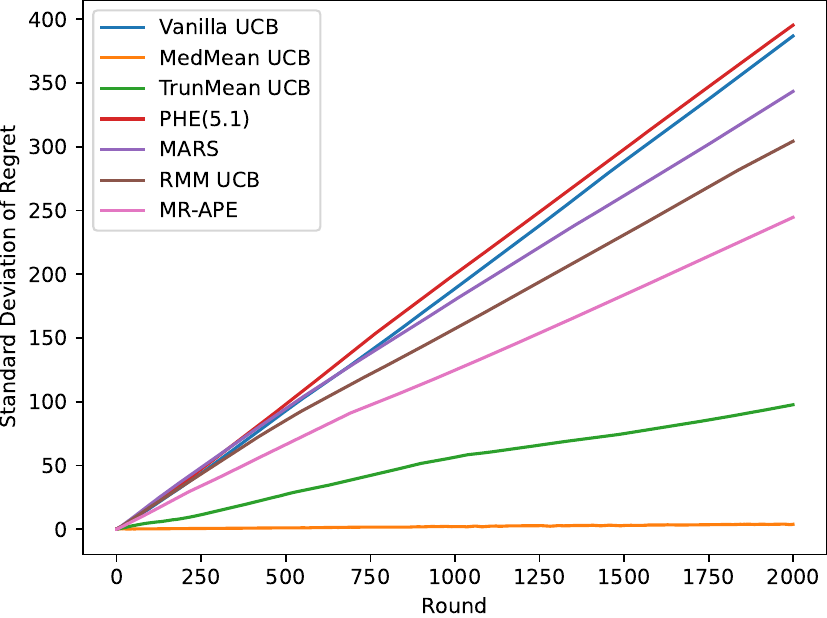}
    \label{fig:reg_std_0.5gap_pa1.55}}
    \caption{Comparison of std.\ deviations of cumulative regrets for Pareto bandits, $\varepsilon_p = 0.5$.}
    \label{fig:std_regret_2}
    \end{figure*}
    
    \begin{figure*}[!t]
    \centering
    \subfloat[$\Delta = 0.1$]{\includegraphics[width=0.48\columnwidth]{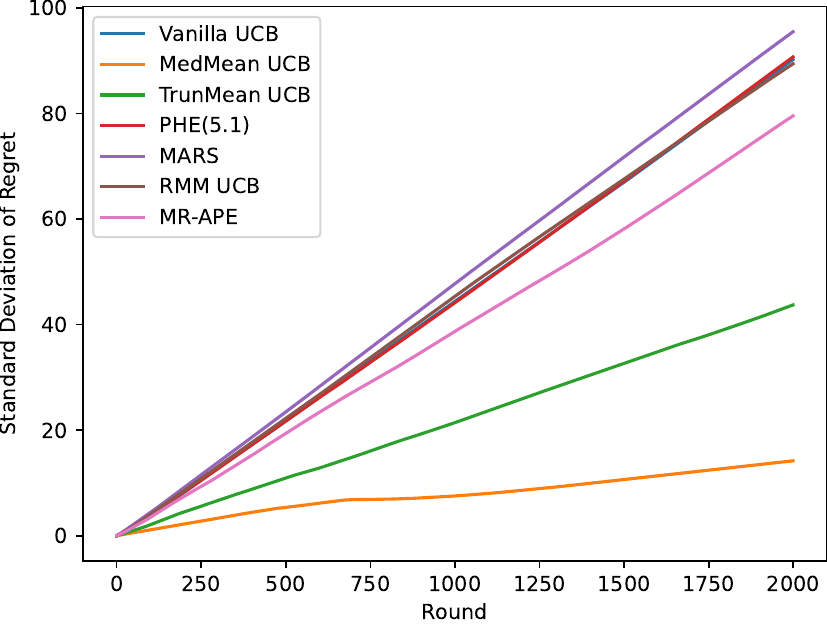}%
    \label{fig:reg_std_0.1gap_pa1.15}}
    \hspace{3mm}
    \hfil
    \subfloat[$\Delta = 0.5$]{\includegraphics[width=0.48\columnwidth]{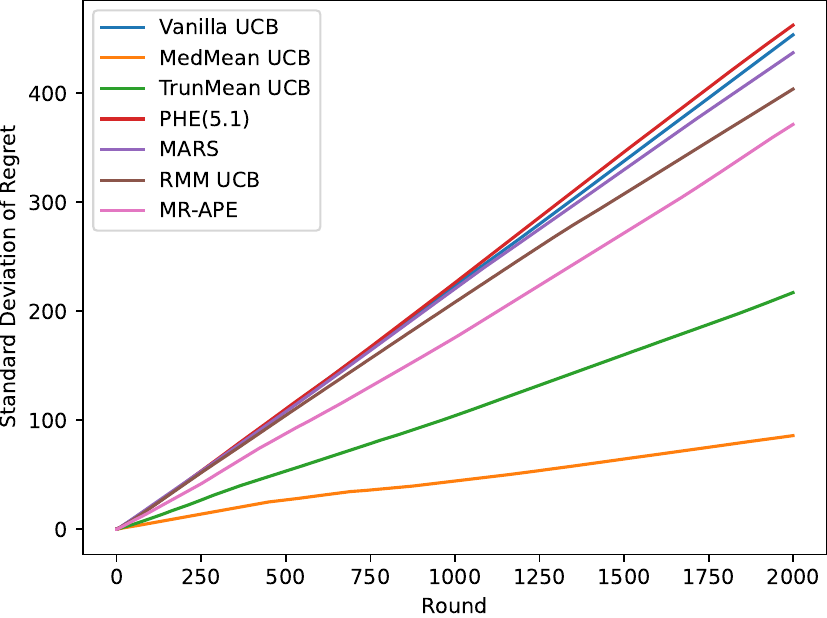}
    \label{fig:reg_std_0.5gap_pa1.15}}
    \caption{Comparison of std.\ deviations of cumulative regrets for Pareto bandits, $\varepsilon_p = 0.1$.}
    \label{fig:std_regret_1}
    \end{figure*}

    \subsection{Experiment: Standard deviation of cumulative regrets}

    In this experiment we investigate the standard deviations of the cumulative regrets for the same algorithms and simulation setting as in Section \ref{sec:experiments}. The results for different $\varepsilon_p$ and $\Delta$ parameters are shown in Figures \ref{fig:std_regret_2} and \ref{fig:std_regret_1}. It can be seen, that the heavy-tailed UCB algorithms based on concentration inequalities, the Median of Means UCB and the Truncated Mean UCB, have the lowest standard deviation of cumulative regret. However, as our previous results showed, their performance considering the average cumulative regret is poor, and the upper confidence bound generated by these algorithms are very conservative. These results also illustrate that for the other algorithms, in case of a small gap, $\Delta = 0.1$, the standard deviations are roughly the same, while for a larger gap, $\Delta = 0.5$, the RMM-UCB and MR-APE algorithms have the least standard deviation for the cumulative regret.
 \end{appendices}
\end{document}